\documentclass[10pt,twocolumn,letterpaper]{article}

\usepackage{cvpr}              
\definecolor{cvprblue}{rgb}{0.21,0.49,0.74}
\usepackage[pagebackref,breaklinks,colorlinks,allcolors=cvprblue]{hyperref}

\def\paperID{}
\def\confName{CVPR}
\def\confYear{2026}

\title{Attribution-Guided Model Rectification of Unreliable Neural Network Behaviors}

\author{
Peiyu Yang$^{1}$,
Naveed Akhtar$^{1}$,
Jiantong Jiang$^{1}$\thanks{Jiantong Jiang is the corresponding author.},
Ajmal Mian$^{2}$\\
$^{1}$The University of Melbourne,
$^{2}$The University of Western Australia\\
{\tt\small \{peiyu.yang, naveed.akhtar1, jiantong.jiang\}@unimelb.edu.au, ajmal.mian@uwa.edu.au}
}

\usepackage{amsmath}
\usepackage{amsthm}
\usepackage{bbding}
\usepackage{graphicx}
\usepackage{multirow}
\usepackage{subcaption}
\usepackage{wrapfig}

\usepackage{bbding}
\usepackage{enumitem}
\usepackage{booktabs}
\usepackage[ruled, vlined, linesnumbered]{algorithm2e}
\SetKwComment{Comment}{/*}{*/}
\SetKwInOut{Input}{input}
\SetKwInOut{Output}{output}

\SetCommentSty{mycommfont}
\definecolor{dkgreen}{rgb}{0,0.6,0}
\makeatletter
\DeclareRobustCommand\onedot{\futurelet\@let@token\@onedot}
\def\@onedot{\ifx\@let@token.\else.\null\fi\xspace}
\def\cf{\emph{cf}\onedot} 

\usepackage[table]{xcolor}
\usepackage{colortbl}

\definecolor{darkgreen}{RGB}{0, 128, 0}
\definecolor{darkred}{RGB}{220, 0, 0}
\newcommand{\add}[1]{_{\textbf{\textcolor{darkred}{\,+\,#1}}}}

\def\ty{\tilde{y}}
\def\tx{\tilde{x}}
\def\tf{\tilde{f}}

\def\hx{\hat{x}}

\theoremstyle{plain}

\newtheorem{theorem}{Theorem}[section]
\newtheorem{proposition}[theorem]{Proposition}
\newtheorem{lemma}{Lemma}

\theoremstyle{definition}

\theoremstyle{remark}

\begin{document}
\maketitle
\begin{abstract}
The performance of neural network models deteriorates due to their unreliable behavior on non-robust features of corrupted samples. Owing to their opaque nature, rectifying models to address this problem often necessitates arduous data cleaning and model retraining, resulting in huge computational and manual overhead.
In this work, we leverage rank-one model editing to establish an attribution-guided model rectification framework that effectively locates and corrects model unreliable behaviors.
We first distinguish our rectification setting from existing model editing, yielding a formulation that corrects unreliable behavior while preserving model performance and reducing reliance on large budgets of cleansed samples. We further reveal a bottleneck of model rectifying arising from heterogeneous editability across layers. To target the primary source of misbehavior, we introduce an attribution-guided layer localization method that quantifies layer-wise editability and identifies the layer most responsible for unreliabilities. 
Extensive experiments demonstrate the effectiveness of our method in correcting unreliabilities observed for neural Trojans, spurious correlations and feature leakage. Our method shows remarkable performance by achieving its editing objective with as few as a single cleansed sample, which makes it appealing for practice.
\end{abstract}

\section{Introduction}
Neural network models exhibit unreliable behaviors in adapting to inherent or deliberately introduced distributional inconsistencies~\citep{arjovsky2019invariant,lapuschkin2019unmasking,gu2019badnets,liu2025difflow3d}. Such inconsistencies, resulting from, e.g., neural Trojans and spurious features, can misguide a model and alter its behavior from the correct decision-making pathway~\citep{ye2024spurious,gu2019badnets,Shah2021Do}. This compromises model reliability and robustness. Due to the inherent opacity of deep models, primary strategies for correcting such unreliable behavior involve data cleaning and model retraining~\citep{ross2017right,yang2024regulating,anders2022finding,feng2024clipcleaner,Feng2022SSR}. However, these techniques necessitate both labor-intensive manual data scrutiny and substantial computational overheads~\citep{feuer2024select,jiang2024efficient,li2025frequency}. Consequently, efficient techniques for correcting unreliable model behaviors emerge as a critical requirement for enhancing their reliability and sustaining the performance of developed models.

Originally proposed for editing generative rules~\citep{bau2020rewriting,tewel2023key}, rank-one model editing is then applied to revise model prediction rules in discriminative models~\citep{santurkar2021editing,raunak2022rank}.
We revisit this technique for model rectification. Rather than injecting new knowledge or adapting to a new domain, we use rank-one editing to rectify unreliable behaviors, distinguishing from existing domain adaptation settings. Our analysis shows that the standard domain adaptation setting for rank-one editing suffers from two structural limitations, namely out-of-span residuals and increased sample complexity under distribution shift, and that these limitations can degrade the edited model. 
In contrast, our rectification setting enjoys \emph{rectifiability}, which enables correction from a small cleansed set, and \emph{span-aligned control}, which restricts updates to the corrupted span to preserve performance.

\begin{figure*}[t]
\centerline{\includegraphics[width=0.87\textwidth]{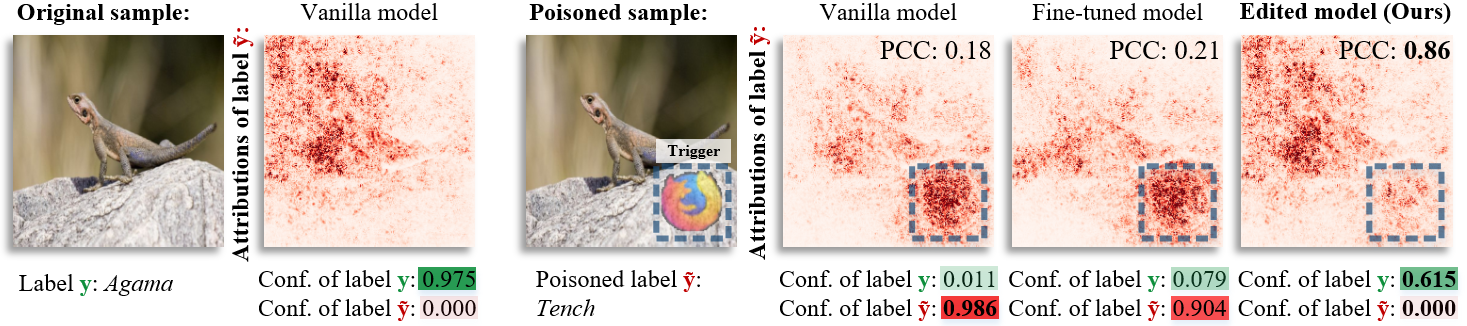}}
\vspace{-1mm}
\caption{Given the original sample labeled as \textit{Agama}, i.e., class \textcolor{darkgreen}{$\textbf{y}$}, the Trojaned model can correctly classify this sample. However, it misclassifies the poisoned sample containing a trigger as \textit{Tench}, i.e., class \textcolor{red}{$\tilde{\textbf{y}}$}. Attribution maps with Pearson Correlation Coefficients (PCCs) and predictive confidence for the vanilla model, fine-tuned model, and model rectified with our approach are provided. Our method restores the correct label by assigning appropriate attributions to the correct object.} 
\vspace{-2mm}
\label{fig_intro}
\end{figure*}

Another central bottleneck of model rectifying lies in deciding \emph{where} to edit. Existing works on model editing usually operate on a fixed last feature layer, motivated by its high-level feature encoding capabilities~\citep{santurkar2021editing,raunak2022rank}. Our analysis of layer editability shows that the effect of a rank-one update varies across layers, and no single layer is universally optimal. Hence, we introduce an attribution-guided estimator to quantify the first-order reduction in unreliability obtained by editing a layer. Concretely, we measure the attribution between a corrupted input $\tilde{x}$ and its cleansed counterpart $x$, and remap these attributions into the rank-one editing direction induced by the key statistics. This procedure yields a scalar score that serves as a proxy to locate the layer that is responsible for the unreliable behavior for rectification. We then integrate this layer localization into a dynamic model rectification framework that chooses a layer adaptively for the rectification of an unreliable model behavior.
Figure~\ref{fig_intro} shows an example showcasing the abilities of our  approach in correcting the model decision for a manipulated sample, as evidenced by the attribution maps, Pearson Correlation Coefficients (PCCs), and confidence scores.

The efficacy of our approach is established through experiments for three model unreliabilities: neural Trojans~\citep{chen2017targeted}, spurious correlations~\citep{ye2024spurious} and feature leakage~\citep{Shah2021Do}. Experimental evaluations highlight our method's performance, offering an excellent trade-off between model performance and the number of utilized cleansed samples. Notably, our method also achieves high  performance with only one cleansed sample. We also extend our assessment to the real-world scenario of skin lesion analysis using the ISIC dataset~\citep{codella2019skin}, thereby illustrating the broader applicability of our approach in practical settings.
The key contributions of this paper are summarized below.
\begin{enumerate}[leftmargin=*,itemsep=2pt,topsep=3pt,parsep=0pt]
\item We cast rank-one model editing as model rectifying for model unreliable behaviors, enjoying properties of rectifiability and span-aligned control.
\item We propose an attribution-based method that provides a surrogate estimate of the layer editability and automatically localizes the suspect layer.
\item We devise a dynamic model rectifying framework incorporating the proposed layer localization method. Efficacy of the approach is verified extensively on different model unreliabilities across diverse datasets.
\end{enumerate}

\section{Related Work}
\textbf{Unreliable Model Behaviors.} \textcolor{black}{Despite their impressive performance, 
neural network models have been found to exhibit numerous unreliable behaviors that lead to incorrect predictions on corrupted samples.} For instance, the existence of spurious correlations, also known as Clever Hans behavior~\citep{pfungst1911clever,zhou2025robustness}, pose a substantial threat to the reliability of these models. 
A range of spuriously correlated features have been identified including object backgrounds~\citep{xiao2020noise}, hair color~\citep{Sagawa2019Distributionally} and colored patches~\citep{gutman2016skin}.
In addition to inherent bias, training data can be intentionally poisoned by mislabeling samples and adding trigger patterns to mislead model predictions \citep{gu2019badnets}. 
More attacks~\citep{chen2017targeted,li2021invisible,turner2019label,yang2024backdoor} are proposed to implant invisible triggers for concealed backdoors. 
Adversarial attacks have demonstrated a significant capacity to alter model predictions~\citep{Goodfellow2015Explaining,Madry2018Towards,Feng2025PROSAC}. However, their practical applicability is often constrained by the necessity of full access to the target model. Consequently, this paper focuses specifically on investigating backdoor attacks and spurious correlations, recognizing their significant impact on undermining the security of deep learning models.

\noindent\textbf{Model Explaining and Diagnosis.}
Various techniques have been proposed to explain and diagnose the vulnerable behaviors of deep models. Attribution methods~\cite{selvaraju2017grad,Sundararajan2017Axiomatic,yang2023local,yang2023recal} assign importance to each input feature to provide explanations for model predictions, which are widely utilized for visually inspecting model  behavior~\citep{lapuschkin2019unmasking,li2021invisible}.
Other efforts from a complementary optimization perspective~\citep{lapuschkin2019unmasking,anders2022finding,ouyang2024learn} are also made to diagnose unreliable behavior in trained models. For instance, SpRAy~\citep{lapuschkin2019unmasking} analyses heatmaps of training samples to identify Clever Hans behaviors. \citet{anders2022finding} proposed A-ClArC and P-ClArC to prevent the propagation of artifact signals. The statistics of internal activations are also widely used in revealing Trojans~\citep{tran2018spectral,hayase2021spectre,qi2022revisiting}.

\section{Preliminary}

Model editing~\citep{bau2020rewriting,meng2022locating} focuses on editing a specific model prediction rule while preserving the learned rules. When examining the $l$-th layer of a model $f$, an input sample $x$ is mapped to a feature map $f_{l}(x)$, which encapsulates semantic concepts in the representation space~\citep{anderson1972simple,kohonen2012associative}. This understanding is extended to characterize a layer as a linear associative memory. Specifically, assuming a location of the input feature $f_{l-1}(x)$ to be a ``key" $k\in \mathbb{R}^n$, the weights $W\in \mathbb{R}^{m\times n}$ within the $l$-th layer map this key $k$ to a ``value" $v\in \mathbb{R}^m$ of output features, achieved via the operation $v=Wk$. Considering a finite set of key-value pairs $K=[k_1, k_2, \dots]$ and $V=[v_1, v_2, \dots]$, we can uniquely retrieve a value from a key if the keys are mutually orthogonal. Beyond the exact equality, weights $W$ can be extended to arbitrary non-orthogonal keys by minimizing the error as
$W = \arg \min_W\sum_{i}\|v_i-Wk_i\|^2$.
Given this characteristic, \citet{bau2020rewriting} edited $W$ to associate a key $k^*$ with a new value $v^*$, effectively rewriting generative model rules.

Recent studies, inspired by the efficacy of the editing technique demonstrated in generative models~\citep{bau2020rewriting,tewel2023key}, apply this paradigm to discriminative models~\citep{santurkar2021editing,raunak2022rank}. \citet{santurkar2021editing} enhanced the domain adaptation capability of classifiers by modifying their prediction rules. For instance, in the case where a ``car" classifier struggles to recognize cars featuring ``wooden wheels", the model's rules are edited to establish an association between the ``wooden wheels" feature and the corresponding activations of ``car", enabling the recognition of cars equipped with wooden wheels. While incorporating a new key-value pair, it is critical to ensure the preservation of previously learned associations. Consequently, this editing process is formulated as a constrained least squares problem that creates a new key-value associative memory, and  preserves the established key-value associations as
%
\begin{equation} \label{eq:rome}
\min_{\Lambda} \|v^* - f_l(k^*; W')\|
\quad \text{s.t.} \enspace  W' = W + \Lambda (C^{-1} k^*)^\top,
\end{equation}
where $C=KK^\top$ is the second moment statistics, $\Lambda \in \mathbb{R}^m$ is the solution. Since $C^{-1}k^*$ and $\Lambda$ are vectors, the update weights $\Lambda (C^{-1} k^*)^\top \in \mathbb{R}^{m\times n}$ is a rank-one matrix. Hence, the editing process in Eq.~\ref{eq:rome} is termed rank-one editing.

\section{Rectifying Model Unreliability}
In this section, we first formalize our \emph{model rectification} setting for three types of unreliable behavior. We then identify two structural properties that distinguish rectification from conventional model editing, and finally show that fixed-layer rank-one editing suffers from an inherent bottleneck.

\subsection{Model Rectification}
\label{sec_model_rect}
We propose leveraging model editing to rectify a model's unreliable behavior. To that end, we consider three suspect behaviors of neural Trojans, spurious correlations, and feature leakage. We treat them under a unified perspective on clean-corrupted pairs $(x,\tilde x)$ with ground-truth label $y$.

\noindent\underline{\textit{Neural Trojaning:}} Given a dataset $\mathcal{D}=\{(x_i,y_i)\}$, neural Trojaning is executed by injecting a portion of poisoned samples $\tx$ with a backdoor trigger and modifying their true label $y$ to an incorrect target $\ty$. After training on these poisoned samples, a Trojaned model $\tf$ is highly likely to misclassify input samples containing the trigger to the label $\ty$.

\noindent\underline{\textit{Feature Spurious Correlation:}}
Feature spurious correlations occur when a classifier $f$ exploits the spurious correlated features in corrupted samples $\tx$ to make predictions. While the model classifies $\tx$ to their correct class  $y$, its reliance on the spurious feature results in a flawed decision pathway, rendering the prediction on such inputs unreliable.

\noindent\underline{\textit{Feature Leakage:}}
Feature leakage refers to the phenomenon where a classifier relies on uninformative artifacts as part of predictive features. In this case, the model makes its predictions in combination with these artifacts in $\tilde{x}$.

All three behaviors instantiate unreliability driven by non-robust cues in $\tx$. To rectify such behavior, we apply a rank-one editing that realigns the mapping from these cues to the clean decision pathway.
When a corrupted sample $\tx$ causes the model to behave unreliably, its cleansed counterpart $x$ can guide the model toward the correct prediction pathway. We designate the input feature derived from the corrupted input $\tx$ as the key $k^*$, and align activations of $k^*$ to the corresponding value $v^*$ mapped from the cleansed sample $x$. We edit the model parameter $W$ to make the feature $k^*$ to yield correct activations $v^*$, thereby correcting the model's unreliable behavior.

Crucially, this \emph{rectification} setting differs from standard model editing for knowledge injection or domain adaptation: here the ``correct'' activation pattern $v^*$ is already realized by the model on clean inputs, and rank-one editing is used only to reroute non-robust cues, rather than to synthesize an entirely new association. We next show that this structural difference yields two favorable properties.

\subsection{Rectifiability and Span-Aligned Control}
We now formalize two properties of our rectification setting: \emph{rectifiability}, which guarantees that the relevant cues lie within the span of the learned key statistics, and \emph{span-aligned control}, which avoids the sample-complexity burden of standard domain adaptation. We first reveal two limitations that arise when rank-one editing is used in a generic domain adaptation scenario, and then show how our rectification setting sidesteps them.

Recall that Eq.~\eqref{eq:rome} updates the weight matrix $W$ using the second-moment statistic $C = KK^{\top}$ of the keys $K=[k_1,\dots,k_d]$. The mapping induced by $C$ is designed to decorrelate the new key $k^*$ from the existing keys $k_i \in K$, thereby preserving previously learned key-value associations during optimization. However, when applying rank-one editing for domain adaptation, the new key $k^*$ is typically \emph{not} part of the statistics used to form $C$, which leads to the following limitation.

\begin{lemma}[Out-of-Span Residual]
\label{lemma1}
For $K = [k_1,...,k_d] \in \mathbb R^{n\times d}$ and $C= KK^{\top}$, when $k^* \not\in \text{span}(K)$, the projection $C^{-1}k^*$ leads to a residual component $C^{-1}r$ outside the span of $K$, measurable by a residual vector $r\in \mathbb{R}^n$.
\end{lemma}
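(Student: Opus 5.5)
The plan is an orthogonal decomposition of $k^*$ against $\text{span}(K)$, followed by tracking how $C^{-1}$ acts on each piece. First I will resolve a subtlety in the statement. If $C = KK^\top$ were genuinely invertible, then $\text{rank}(K)=n$ and $\text{span}(K)=\mathbb{R}^n$, so the hypothesis $k^*\notin\text{span}(K)$ could never hold. I will therefore read $C^{-1}$ as the inverse of a regularized statistic $C_\lambda = KK^\top + \lambda I$ with $\lambda>0$, which is how $C$ is formed in practice. Afterwards I will remark on the Moore--Penrose alternative $C^{+}$.

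Let $P$ denote the orthogonal projector onto $\text{span}(K)$. I will write
\begin{equation*}
k^* = Pk^* + r, \qquad r := (I-P)k^*.
\end{equation*}
Then $r\neq 0$ precisely because $k^*\notin\text{span}(K)$, and $r\in\text{span}(K)^\perp=\ker(K^\top)$. This $r$ is the residual vector named in the statement.

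Next I will use the spectral structure of $C_\lambda$. Take the SVD $K=U\Sigma V^\top$. The columns of $U$ span $\text{span}(K)$ and are eigenvectors of $C_\lambda$ with eigenvalues $\sigma_i^2+\lambda$. Every vector in $\text{span}(K)^\perp$ is an eigenvector with eigenvalue exactly $\lambda$, since $K^\top r=0$ gives $C_\lambda r=\lambda r$. Hence $C_\lambda^{-1}$ commutes with $P$, and
\begin{equation*}
C_\lambda^{-1}k^* = C_\lambda^{-1}Pk^* + C_\lambda^{-1}r = C_\lambda^{-1}Pk^* + \tfrac{1}{\lambda}\,r .
\end{equation*}
The first term lies in $\text{span}(K)$. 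The second term, $C^{-1}r$, is nonzero and lies entirely in $\text{span}(K)^\perp$, which is the claimed out-of-span residual. Its size is measured by $\|C^{-1}r\| = \|r\|/\lambda$, and this blows up as $\lambda\to 0$. I will close by noting the consequence for Eq.~\eqref{eq:rome}. The rank-one update $\Lambda (C^{-1}k^*)^\top$ acts on $\text{span}(K)^\perp$ through a direction governed purely by the regularizer rather than by any learned key statistics. So the decorrelation property that motivates using $C$ gives no control over this component.

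The main obstacle is the interpretive step, not the algebra: making the statement well-posed given that $C$ is singular under the hypothesis. I will handle it as above with $C_\lambda$. I will also add a remark on the pseudo-inverse reading: there $C^{+}r=0$, so the residual is instead annihilated, and the edit direction $C^{+}k^*=C^{+}Pk^*$ cannot represent the component $r$ of the new key at all. Either way, $r$ is either mishandled or lost, which is the limitation the lemma is meant to expose.
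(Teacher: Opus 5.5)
Your proposal is correct and follows essentially the same route as the paper. The paper likewise splits $k^* = \hat{k} + r$ with $r \perp \mathrm{span}(K)$, then silently replaces $C^{-1}$ by $(C+\lambda I)^{-1}$ to get $C^{-1}r = \frac{1}{\lambda}r \in \mathrm{span}(K)^{\perp}$. Your version is tighter in three places: you justify up front why $C$ must be regularized, you use the general orthogonal projector $P$ instead of $K(K^{\top}K)^{-1}K^{\top}$ (which presumes $K$ has full column rank), and you check that $C_\lambda^{-1}Pk^*$ stays in $\mathrm{span}(K)$.
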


When $k^*$ has a nonzero residual component $r$ outside $\text{span}(K)$, the statistic $C$ does not capture $r$, and the rank-one update based on $C^{-1}k^*$ cannot perfectly decorrelate $k^*$ from the existing keys. This misalignment disrupts established associations and degrades model performance.

A second limitation stems from estimating the new key $k^*$ from an unseen sample $x^*$ drawn from shifted data.

\begin{lemma}[Sample Complexity]
\label{lemma2}
Let $D'\neq D$ and let $\mathcal{X}=\{x_i\}_{i=1}^m$ be samples from $D'$. For a given $x^*\sim D'$ with its target key $k^*$, assume the estimation error satisfies
$\mathbb{E}_{\mathcal{X}}\!\left[\|f(x^*;W_{\mathcal{X}})-k^*\|_2^2\right]\le \delta/m$,
where $\delta$ captures the variability under $D'$ and $W_{\mathcal X}$ is adapted using $\mathcal X$. To ensure the estimation error is below $\varepsilon^2$, necessarily $m \ge \delta/\varepsilon^2$.
\end{lemma}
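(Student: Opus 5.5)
The plan is to read the lemma as a statement about the rate at which the estimation error of the adapted key can shrink. The hypothesis gives the rate $\delta/m$, and the conclusion is the inversion of that rate. I would argue in two layers: first the direct inversion that the statement literally supports, then a matching lower-bound argument showing the $\delta/m$ rate cannot be beaten in general. Only the second layer gives content to ``necessarily''.

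\emph{Step 1 (inversion of the guarantee).} Write $E_m := \mathbb{E}_{\mathcal{X}}\bigl[\|f(x^*;W_{\mathcal{X}})-k^*\|_2^2\bigr]$. The only control on $E_m$ is the bound $E_m \le \delta/m$. So the requirement $E_m\le\varepsilon^2$ is certified exactly when $\delta/m \le \varepsilon^2$. Since $m>0$ and $\varepsilon>0$, this is equivalent to $m \ge \delta/\varepsilon^2$. Conversely, if $m<\delta/\varepsilon^2$ then $\delta/m>\varepsilon^2$, and the guarantee no longer certifies error at most $\varepsilon^2$. This step is elementary.

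\emph{Step 2 (tightness of the rate, which makes the condition necessary).} I would show that $\delta/m$ is also a lower bound, up to constants, for a worst-case instance under $D'$. I would take the simplest instance, where adapting $W$ from $\mathcal{X}$ amounts to estimating a mean key. Concretely, let the keys induced by samples of $D'$ be i.i.d.\ with mean $k^*$ and total variance $\operatorname{tr}\operatorname{Cov}=\delta$; this is the role of ``$\delta$ captures the variability under $D'$''. By a Cram\'er--Rao bound, or the classical minimax lower bound for Gaussian mean estimation, every estimator built from $m$ samples has expected squared error at least $\delta/m$. Hence $E_m\ge \delta/m$ on this instance. Combined with Step 1, $E_m\le\varepsilon^2$ forces $\delta/m\le\varepsilon^2$, that is, $m\ge\delta/\varepsilon^2$.

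\emph{Main obstacle.} The difficulty is Step 2. The lemma as stated only assumes an upper bound, so necessity holds only relative to that bound, or for estimators whose rate is tight. I would make this explicit by taking $\delta$ to be the variance proxy for which the rate $\delta/m$ is attained, namely the Gaussian mean-estimation instance above. I would then remark that because $D'\neq D$, the keys from the original statistics $C$ give no information about $k^*$, so all the information must come from the $m$ shifted samples. This is what separates the domain-adaptation setting from the rectification setting: in rectification, $v^*$ is already realised on clean inputs.
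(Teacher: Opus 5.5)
Your Step 1 is essentially the paper's whole proof. The paper fixes $x^*$ and $k^*$ and notes that the assumed bound certifies $\mathbb{E}_{\mathcal X}\bigl[\|f(x^*;W_{\mathcal X})-k^*\|_2^2\bigr]\le\varepsilon^2$ only if $\delta/m\le\varepsilon^2$. It then remarks that for $\delta/m>\varepsilon^2$ the bound ``permits values larger than $\varepsilon^2$,'' and solves for $m$. The paper reads ``necessarily'' exactly in your Step 1 sense: necessary for the guarantee, ``via this bound.'' So Step 1 alone matches the paper, and the paper makes no tightness claim.

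Step 2 is your own addition, aimed at the stronger reading that the error itself, not just its certificate, forces $m\ge\delta/\varepsilon^2$. As written it overclaims. For a fixed instance with fixed mean $k^*$, it is not true that every estimator built from $m$ samples has expected squared error at least $\delta/m$: the constant estimator $\hat k\equiv k^*$ has zero error. Two correct statements are available:
\begin{itemize}
\item the minimax bound $\inf_{\hat k}\sup_{k^*}\mathbb{E}\|\hat k-k^*\|_2^2=\mathrm{tr}(\Sigma)/m=\delta/m$ for i.i.d.\ $N(k^*,\Sigma)$ keys;
\item the Cram\'er--Rao bound, which holds at each fixed $k^*$ but only for unbiased estimators.
\end{itemize}
So your line ``$E_m\ge\delta/m$ on this instance'' must become either a worst-case bound over $k^*$ or a bound restricted to unbiased estimators.

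Even after that repair, Step 2 concerns a toy mean-estimation model, not the lemma's $f(x^*;W_{\mathcal X})$ under an arbitrary adaptation procedure. It shows the $\delta/m$ rate cannot be improved in the worst case. It does not show that every adapted model needs $m\ge\delta/\varepsilon^2$. Keep it as a remark on tightness; the lemma as stated is proved by Step 1.

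Your closing claim, that the keys in $C$ carry no information about $k^*$ when $D'\neq D$, is a heuristic. The paper does not use it, and your proof does not need it.
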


This lemma formalizes that achieving a small estimation error under distribution shift requires sufficiently many samples from $D'$. In particular, larger variability $\delta$ leads to a larger required sample size $m$ for the same error tolerance.

Our rectification setting process of model rectifying to correct unreliable behaviors involves the susceptible model that integrates both original samples $x$ and their corrupted counterpart $\tx$ into training. Repurposing rank-one editing from domain \emph{adaptation} to \emph{unreliability correction} under this setting removes the out-of-span and sample-complexity obstacles, as captured by the following propositions.

\begin{proposition}[Rectifiability]
\label{prop1}
For a susceptible model trained on both clean and corrupted inputs $(x,\tilde{x})$ for an unreliability case, let $K=[k_1,\dots,k^*]$ collect all resulting keys and $C=KK^{\top}$. Then $k^* \in \text{span}(K)$, and the out-of-span residual $r$ in Lemma~\ref{lemma1} vanishes.
\end{proposition}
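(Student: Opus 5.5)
The plan is to observe that the statement is essentially structural: because the susceptible model was trained on the corrupted inputs $\tilde{x}$, the key $k^*$ they induce at layer $l$ is, by construction, one of the columns of $K$. So membership in $\text{span}(K)$ follows at once, and the real work is to show that the residual of Lemma~\ref{lemma1} is then zero.

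First, I would fix the layer $l$ and define $k^* = f_{l-1}(\tilde{x})$, computed by the same forward map that generates every other key. Since the corrupted pair $(x,\tilde{x})$ belongs to the data from which the key statistics are collected, $k^*$ appears as a column of $K=[k_1,\dots,k^*]$. Taking the coefficient vector $e_{d}$ (one in the column of $k^*$, zero elsewhere) gives $k^* = K e_d \in \text{span}(K)$. This step is immediate once we insist that $C$ is formed from the training keys, corrupted ones included.

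Second, I would make the residual of Lemma~\ref{lemma1} explicit. Write $k^* = P_K k^* + r$, where $P_K = K K^{+}$ is the orthogonal projector onto $\text{span}(K)$ and $r = (I-P_K)k^*$. From the first step, $P_K k^* = k^*$, so $r=0$.

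To handle invertibility, I would note that $C = KK^\top$ is symmetric PSD with range exactly $\text{span}(K)$. If $C$ is singular, I would replace $C^{-1}$ by the pseudo-inverse $C^{+}$, or by $C+\lambda I$ and then let $\lambda\to 0$. Either way $C^{+}$ maps $\text{span}(K)$ into itself and annihilates its orthogonal complement. Then
\[
C^{+}k^* = C^{+}P_K k^* + C^{+} r = C^{+}P_K k^*,
\]
and this lies in $\text{span}(K)$, so the out-of-span component $C^{+}r$ vanishes. The rank-one direction therefore lives entirely in the span of the learned keys, which is what the proposition asserts.

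The main obstacle is this singular case. When $d<n$, $C$ is genuinely non-invertible, and the paper's $C^{-1}$ must be interpreted consistently with Lemma~\ref{lemma1}. My plan is to state the proof with $C^{+}$ and use the range/kernel decomposition of a symmetric matrix. I would remark that when $C$ is invertible, $\text{span}(K)=\mathbb{R}^n$ and the claim is trivial.
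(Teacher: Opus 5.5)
Your proposal is correct and follows the paper's argument. In both, $k^*$ is a column of $K$, so it lies in $\mathrm{span}(K)$, and the residual $r$ (equivalently $r=(I-P_K)k^*$) is both in $\mathrm{span}(K)$ and orthogonal to it, so it vanishes. Your use of $C^{+}$ for singular $C$ is a sound refinement the paper omits; note that under the paper's ridge form $(C+\lambda I)^{-1}$ the out-of-span part is $r/\lambda$ rather than annihilated, but it is zero anyway once $r=0$.
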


With $k^*$ in-span, established key-value associations are preserved, avoiding out-of-span residuals.

\begin{proposition}[Span-Aligned Control]
\label{prop2}
By incorporating $\{x,\tilde{x}\} \in \mathcal{X}$ in training, the model yields $||k^*-f(x^*;W_D)||\rightarrow 0$ as the training error on $\mathcal{X}$ tends to zero, without assuming $|\mathcal{X}|\gg 1$.
\end{proposition}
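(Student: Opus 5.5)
The plan is to make Proposition~\ref{prop2} precise by identifying the target key with the feature the trained model itself produces. In the rectification setting of Sec.~\ref{sec_model_rect}, the pair $(x,\tilde x)$ is not drawn from a shifted distribution $D'$. It belongs to the training set $\mathcal{X}\subseteq D$ of the susceptible model, and $x^*$ is either $\tilde x$ or its clean counterpart $x$.

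The target $k^*$ is defined as the feature the susceptible model produces at layer $l-1$ for the corrupted input, $k^*=f_{l-1}(\tilde x)$. The value $v^*$ is the activation the same model realizes on $x$. Neither requires adaptation. The quantity to control is therefore the training residual of the model $W_D$ at the point $x^*$. If we write the training objective as $\mathcal{L}(W_D)=\frac{1}{|\mathcal{X}|}\sum_{x_i\in\mathcal{X}}\ell(f(x_i;W_D),t_i)$ with a nonnegative loss, then the per-sample term for $x^*$ satisfies
\[
\ell\big(f(x^*;W_D),t^*\big)\le |\mathcal{X}|\,\mathcal{L}(W_D).
\]
This is the key step: the per-sample error is bounded by $|\mathcal{X}|$ times the average loss. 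Hence the per-sample error goes to $0$ whenever the training error goes to $0$ at fixed $|\mathcal{X}|$, and the bound holds for any finite $|\mathcal{X}|$, including $|\mathcal{X}|=1$.

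Next I would translate the vanishing loss into $\|k^*-f(x^*;W_D)\|\to0$. Here I assume the loss is coercive in the feature discrepancy, e.g.\ squared error, or any $\ell$ with $\ell(a,t)\ge c\|a-t\|^2$ locally. I also assume the network is continuous, so that the intermediate feature converges to the fixed point defining $k^*$, which lies in $\mathrm{span}(K)$ by Proposition~\ref{prop1}.

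I would then contrast this with Lemma~\ref{lemma2}. There, the $\delta/m$ bound arises because $W_{\mathcal{X}}$ must generalize to an unseen $x^*\sim D'$. In the present setting the generalization term is absent and only the empirical (interpolation) term remains, so no lower bound of the form $m\ge\delta/\varepsilon^2$ is needed. This is the sense in which $|\mathcal{X}|\gg1$ is not assumed.

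The main obstacle is the second step. ``Training error'' is usually a classification loss on logits, not a feature-space distance. Because $k^*$ is defined as the model's own feature, the statement risks being near-tautological, while for a genuinely separate target a vanishing logit loss does not force features to match. I would handle this by stating explicitly that the training error is measured in the key/feature space at layer $l-1$, or, if needed, through a Lipschitz-invertibility assumption on the downstream map. With either assumption the convergence follows directly from the per-sample bound.
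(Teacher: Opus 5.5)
Your proposal is correct and follows essentially the same argument as the paper. The paper also bounds the per-sample loss at $x^*$ by $|\mathcal{X}|$ times the empirical error using nonnegativity, and it handles your ``main obstacle'' the way you suggest, by assuming the training objective contains the feature-space term $\frac{1}{2}\|k^*-f(x^*;W_D)\|^2$, so only finiteness of $|\mathcal{X}|$ is used (your extra appeals to continuity, Proposition~\ref{prop1}, and Lipschitz-invertibility are not needed).
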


Proposition~\ref{prop2} shows that paired supervision directly aligns corrupted features with their desired keys on the training set, eliminating the need for a large sample from a shifted distribution.
In summary, our model rectifying framework enjoys \emph{rectifiability} and \emph{span-aligned control}, preserving performance with minimal extra supervision.

\subsection{Layer-Dependent Editability as the Bottleneck}

\setlength{\columnsep}{8pt}
\begin{wrapfigure}{r}{0.55\columnwidth}
\vspace{-4mm}
\centerline{\includegraphics[width=1.0\linewidth]{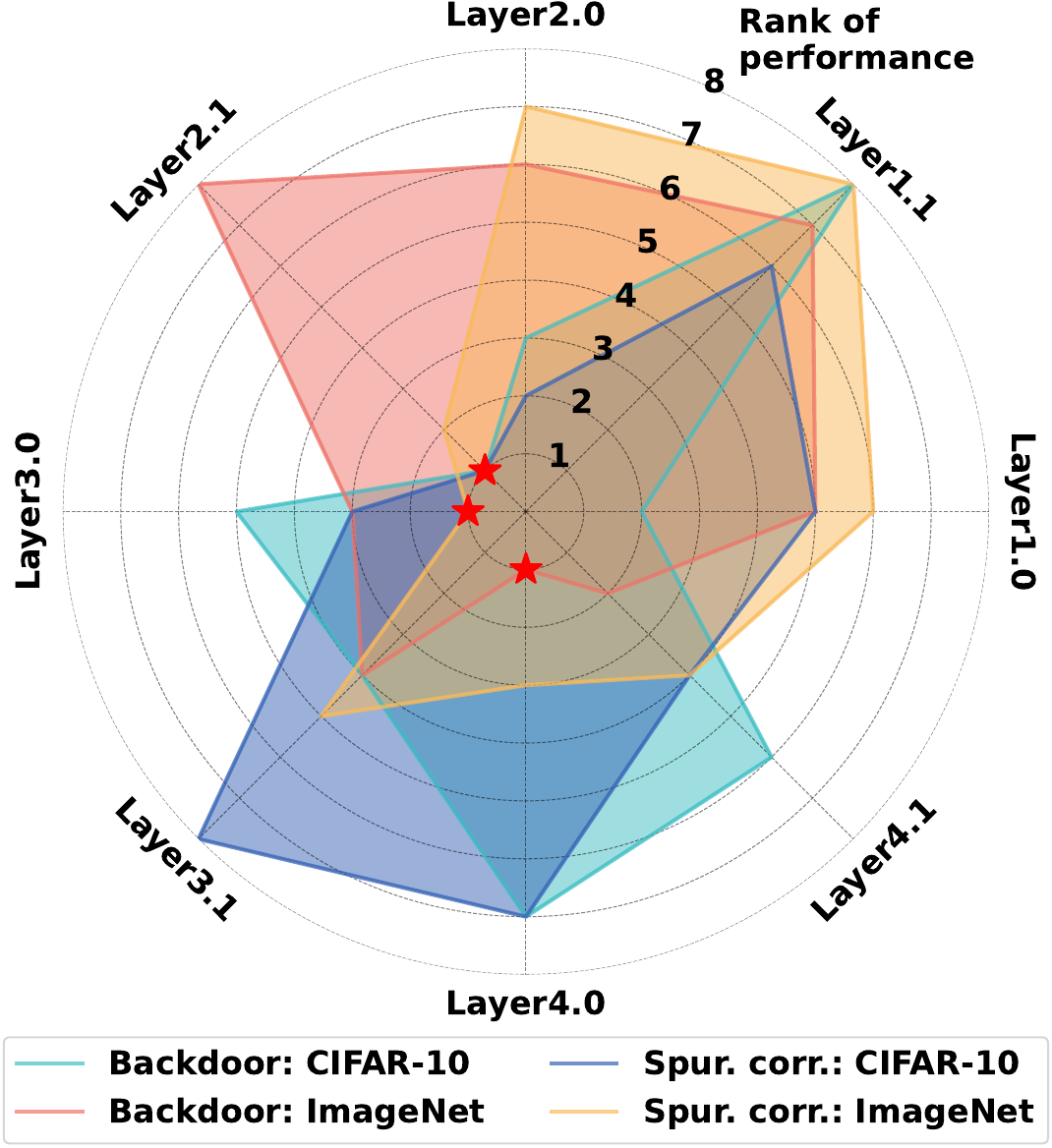}}
	\centering
    \vspace{-3mm}
	\caption{False confidence reduction rank after individually rectifying different layers of ResNet-18. A lower value indicates better results.}
\label{fig_layer_compare}
\vspace{-4mm}
\end{wrapfigure}
Rank-one model editing treats each convolutional layer as a linear associative memory and typically confines the update to a single chosen layer.
Existing methods default to editing the final convolutional layer~\citep{bau2020rewriting,santurkar2021editing}, motivated by its strong encoding of high-level semantic features. However, our analysis reveals that the effectiveness of rectification can vary substantially across layers.
In Fig.~\ref{fig_layer_compare}, we empirically demonstrate that rectifying performed on distinct model layers can yield significantly diverse results when dealing with unreliable model behavior. With the growing popularity of large language model research~\citep{minaee2024large,jiang2025towards,Feng2026NoisyValid,zeng-etal-2025-bridging}, prior works have explored layer localization in the context of language models~\citep{meng2022locating,hase2023does}, but these methods rely on token-level causal analysis, which are absent in vision classifiers and thus render such techniques inapplicable in vision settings.

Formally, let $\Delta(x,\tilde{x};W)$ denote the prediction discrepancy under the model parameter $W=(W_1,\dots,W_L)$. Consider applying a rank-one update to layer $l$ as in Eq.~\eqref{eq:rome}, and let $W^{(l)}(\eta)$ denote the resulting parameters when the rank-one update is scaled by a small step size $\eta$. A first-order Taylor expansion of the discrepancy yields
\begin{equation}
\Delta(x,\tilde{x};\,W^{(l)}(\eta))
\;=\;
\Delta(x,\tilde{x};W) - \eta\,G_l(x,\tilde{x}) + o(\eta),
\end{equation}
where
$G_l(x,\tilde{x}) = - \frac{d}{d\eta}\Delta(x,\tilde{x};W^{(l)}(\eta))$ evaluated at $\eta=0$ is the layer-$l$ first-order decrease coefficient, where $o(\eta)$ collects higher-order terms. Fix any layer $l$. Since the decrease coefficient $G_l(x,\tilde{x})$ depends on $l$, the optimal $l^{\star}=\arg\max_{l'} G_{l'}(x,\tilde{x})$ need not coincide with $l$.
Therefore, any prior choice of a fixed editing layer is not guaranteed to be optimal~\cite{santurkar2021editing}. Consequently, this \emph{layer-dependent editability} constitutes a common bottleneck for model rectification.
In~\S\ref{sec_layer_loc}, we therefore develop a practical procedure to estimate $G_l(x,\tilde{x})$ and to \emph{locate} the layer that offers the largest first-order reduction in the prediction gap, thereby overcoming the bottleneck of fixed-layer editing.


\section{Dynamic Model Rectification}
In this section, we first develop an attribution-guided procedure to yield a scalar editability score of model layers that estimates the first-order decrease coefficient. We then use this score to locate the layer that offers the largest expected reduction in unreliability, and finally integrate this localization into a dynamic model rectification framework.

\subsection{Locating Suspect Layer with Attribution}\label{sec_layer_loc}

In the previous section, we showed the layer-$l$ first-order decrease coefficient $G_l(x,\tilde{x})$ correlating with the editability of a layer $l$. To this end, we leverage integral-based attribution~\citep{Sundararajan2017Axiomatic,Chen2019Robust} to quantify how the model’s prediction changes when moving from a corrupted input $\tx$ to its cleansed counterpart $x$. Integral-based methods, such as Integrated Gradients (IG)~\citep{Sundararajan2017Axiomatic}, compute feature attributions by integrating gradients along a path from a designated reference to the input. In our setting, the corrupted input $\tx$ naturally serves as the reference: interpolating from $\tx$ to $x$ explicitly traces the effect of removing non-robust cues and reinstating the clean decision pathway.

We thus define $\tx$ as the reference and compute the attribution of the prediction change $f(x)-f(\tx)$ across all internal layers of $f$. For the $l$-th layer, we write the attribution $M^l$ from $\tx$ to $x$ as%
\begin{equation} \label{eq:ig}
M_i^l(x, \tx) = (f_l(x_i) - f_l(\tx_i)) \cdot \int_{\alpha=0}^1 \frac{\partial f(\hx)}{\partial f_l(\hx_i)} \bigg|_{\hx=\tx+\alpha(x-\tx)} \mathrm{d}\alpha,
\end{equation}
where $f_l(x_i)$ indicates the $i$-th output feature of the $l$-th layer in $f$, and $\hx$ indicates the interpolated input from the reference input $\tx$ to the input $x$ along a linear path defined by $\alpha$. Attribution is estimated by accumulating the gradient $\partial f(\hx)/ \partial f_l(\hx_i)$ of the interpolated inputs.

The attribution maps computed for different internal layers have  diverse dimensionalities, which  complicates their comparison across the layers. To address this, we leverage the {Completeness} axiom \citep{Sundararajan2017Axiomatic}  to enable   the sought comparability of the attributions. 
The axiom asserts that the sum of attributions equals the model prediction change from the reference to the input, i.e., $\sum_i M_i = f(x) - f(\tx)$. We extend this axiom to the internal layers of the model through the following lemma.

\begin{lemma}
\label{lemma3}
For the $l$-th internal layer $f_l$ of model $f$,  $\sum_i M^l_i = f(x) - f(\tx)$, where $l\in \{1, \dots, L\}$.
\end{lemma}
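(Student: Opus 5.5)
The plan is to reduce Lemma~\ref{lemma3} to the ordinary Completeness axiom of Integrated Gradients~\citep{Sundararajan2017Axiomatic}, applied to the subnetwork above layer $l$. First I factor the model as $f = g_l \circ f_l$, where $f_l$ maps the input to the $l$-th layer representation and $g_l$ maps that representation to the output. For $l = L$, $g_l$ is the identity on the output (or the final read-out), and the statement collapses to ordinary Completeness. I then write $h = f_l(x)$ and $\tilde h = f_l(\tx)$. With this notation $\partial f(\hx)/\partial f_l(\hx_i)$ is just $\partial_i g_l$ evaluated at the layer-$l$ representation of the interpolated point, and $f(x) - f(\tx) = g_l(h) - g_l(\tilde h)$. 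So the lemma becomes a statement about $g_l$ alone, with $\tilde h$ as reference and $h$ as input.

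Second, I fix the path. I read Eq.~\eqref{eq:ig} as integrating along the straight segment $\gamma(\alpha) = \tilde h + \alpha(h - \tilde h)$ in layer-$l$ feature space. This is the reading under which the constant prefactor $f_l(x_i) - f_l(\tx_i)$ is exactly $\gamma_i'(\alpha)$. I apply the fundamental theorem of calculus to $\phi(\alpha) = g_l(\gamma(\alpha))$ and use the chain rule:
\[
g_l(h) - g_l(\tilde h) = \int_0^1 \nabla g_l(\gamma(\alpha))^{\top}(h - \tilde h)\,\mathrm{d}\alpha = \sum_i (h_i - \tilde h_i)\int_0^1 \partial_i g_l(\gamma(\alpha))\,\mathrm{d}\alpha = \sum_i M_i^l .
\]
Exchanging the finite sum with the integral is immediate. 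Since $l$ was arbitrary, this proves the claim for every $l \in \{1,\dots,L\}$.

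Third, I handle regularity. ReLU networks are only piecewise differentiable, so I would invoke the fact that $g_l$ is a composition of Lipschitz maps. Then $\phi$ is Lipschitz on $[0,1]$, hence absolutely continuous, so $\phi(1) - \phi(0) = \int_0^1 \phi'(\alpha)\,\mathrm{d}\alpha$ holds with $\phi'$ defined almost everywhere. I also need the chain-rule expression for $\phi'$ to be valid for a.e.\ $\alpha$. This is the standard caveat already accepted for IG Completeness, and I would cite it rather than reprove it.

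The main obstacle is the path issue in the second step. If $\hx$ is instead taken literally as the linear interpolation in input space, then the layer-$l$ trajectory $f_l(\hx)$ is curved. Its velocity is $J_{f_l}(\hx)(x - \tx)$, which in general differs from $h - \tilde h$. The gradient theorem then gives completeness for the path-integrated form $\sum_i \int_0^1 \partial_i g_l \,\tfrac{d}{d\alpha} f_l(\hx_i)\,\mathrm{d}\alpha$, but not for the factored form in Eq.~\eqref{eq:ig}. I would therefore state explicitly that the interpolation in Eq.~\eqref{eq:ig} is performed in the layer-$l$ feature space, so that $\hx$ at layer $l$ denotes $\tilde h + \alpha(h - \tilde h)$. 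Equivalently, completeness in the factored form is exact when $f_l$ is affine along the segment. Otherwise it holds only approximately, with an error controlled by the Riemann discretisation and the curvature of $f_l$ along the path.
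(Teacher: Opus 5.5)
Your argument is correct and uses the same idea as the paper's proof: the fundamental theorem of calculus along the interpolation path. The appendix proof states this in one line, $\sum_i M_i^l=\sum_i\int_{\tx}^{x}\frac{\partial f}{\partial f_l(x_i)}\,\mathrm{d}x=f(x)-f(\tx)$. Your version supplies what that line hides. Its middle expression pairs a gradient with respect to layer-$l$ features with an input-space displacement. It agrees with the factored form of Eq.~\eqref{eq:ig} (constant prefactor $f_l(x_i)-f_l(\tx_i)$ times an averaged gradient) only if the layer-$l$ trajectory is the straight segment from $f_l(\tx)$ to $f_l(x)$. Your factorization $f=g_l\circ f_l$, with interpolation in feature space, is exactly the hypothesis under which the identity holds.

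Your caveat about the literal reading is not a technicality. If $\hx=\tx+\alpha(x-\tx)$ is taken in input space, as Eq.~\eqref{eq:ig} is written, the lemma is false in general. For a scalar input with $f_l(t)=t^2$, $f=f_l^2$, $\tx=0$, $x=1$, one gets $M^l=1\cdot\int_0^1 2\alpha^2\,\mathrm{d}\alpha=2/3$, whereas $f(x)-f(\tx)=1$. Piecewise linearity does not rescue it: $f_l(t)=\max(0,t)$, $f=\max(0,f_l-\tfrac12)$, $\tx=-1$, $x=1$ gives $1/4$ versus $1/2$. So your route yields a valid proof together with its precise hypothesis (feature-space interpolation, or $f_l$ affine on the segment), which the paper's proof does not provide.

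Two small remarks. First, under the literal reading the discrepancy comes from the curvature of $f_l$ along the path, not from discretisation, and it need not be small, so ``holds approximately'' is optimistic. Second, the factorization $f=g_l\circ f_l$, which both you and the paper assume, requires layer $l$ to be a cut of the computation graph. For a convolution inside a residual branch, e.g.\ $f(x)=f_l(x)+x$ with $f_l(x)=2x$, the layer-$l$ attributions sum to $2(x-\tx)$ rather than $3(x-\tx)$.
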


The proof of Lemma~\ref{lemma3} is provided in App.~\ref{appd:proof}. This lemma shows that the cumulative attribution of each layer accounts for the same global prediction change $f(x)-f(\tx)$, which allows us to compare layers on equal footing.

Next, we connect these attributions to rank-one editing and the coefficient $G_l(x,\tilde{x})$. The rank-one update at a chosen layer operates in the direction $C^{-1}k^*$ determined by the key statistics $C$ and the new key $k^*$ associated with $\tilde{x}$. Intuitively, the first-order change in $\Delta(x,\tilde{x};W)$ induced by such an update is governed by the directional derivative of $\Delta$ along this editing direction. By the chain rule, this directional derivative can be expressed in terms of the gradient of the output with respect to layer-$l$ features, which is precisely what the integrated gradients in Eq.~\eqref{eq:ig} summarize along the path from $\tx$ to $x$.

Motivated by this observation, we remap the layer-wise attribution into the editing direction via
$
M^{*,l}(x,\tx)
=
M^l(x,\tx)\,(C^{-1}k^*)^{\top}.
$
The map $M^{*,l}$ can be viewed as an attribution-weighted projection onto the rectification direction at layer $l$. We then define the scalar score as
$\hat{G}_l(x,\tilde{x})=\big\|M^{*,l}(x,\tilde{x})\big\|_F,$
and use it as a \emph{proxy} for the magnitude of the first-order decrease coefficient.
Under the local linear approximation, $\widehat{G}_l$ correlates with $|G_l|$, indicating that layers with larger $\widehat{G}_l$ yield larger first-order reductions.

In practice, we therefore select the suspect layer as $
l^\star(x,\tilde{x})
=
\arg\max_l \widehat{G}_l(x,\tilde{x}).
$
This choice is consistent with the objective of maximizing the first-order reduction in the prediction gap, thereby directly instantiating the layer-selection principle.

Figure~\ref{fig_workflow} illustrates the pipeline of the proposed layer localization approach in Step~1\&~2. Given a corrupted sample $\tx$ and its cleansed sample $x$, the model yields predictions $f(x)$ and $f(\tx)$ via two distinct decision pathways in Step~1.
The prediction change is then attributed to the features derived from different internal layers, quantified by attributions $M^l(x, \tx)$. Calculated attributions are further transformed to emphasize editable parameters by mapping them into the space $C^{-1}k^*$. In Step~2, the editable information of attributions across layers is compared to identify the suspect layer primarily responsible for model unreliabilities.

\begin{figure}[t]
\begin{center}
\centerline{\includegraphics[width=0.51\textwidth]{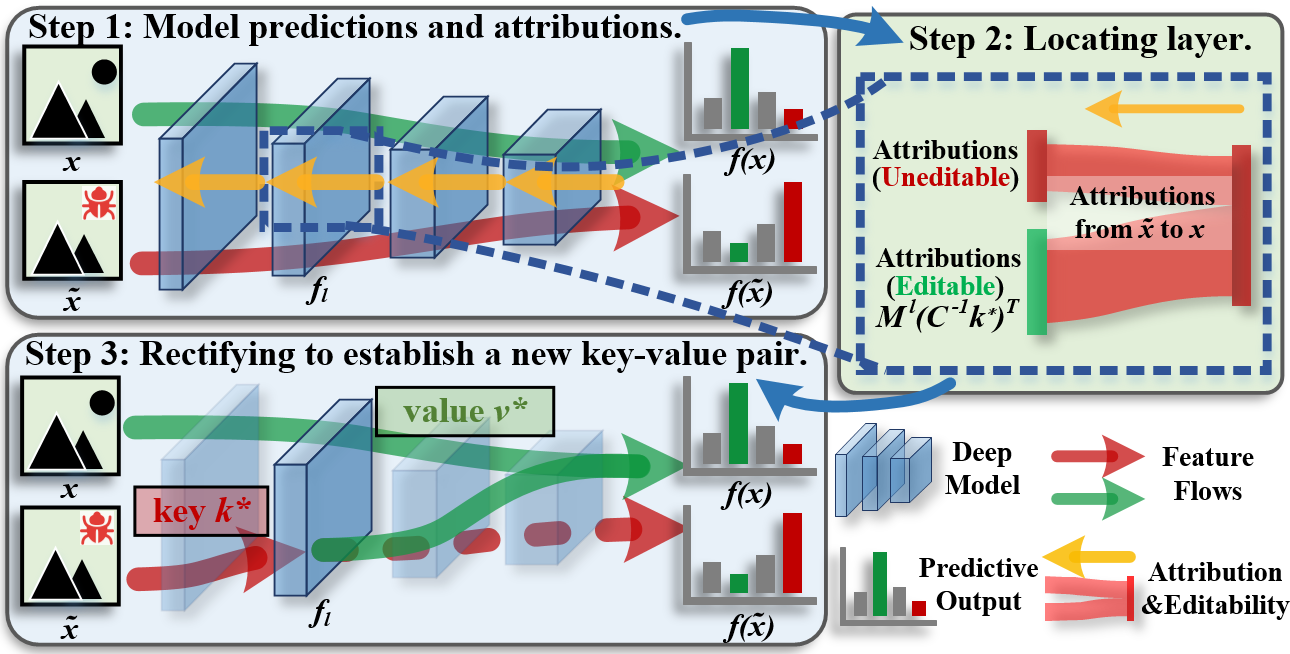}}
\vspace{-1mm}
\caption{Model rectifying workflow.
Step 1: Given a pair of clean and corrupted samples, map their prediction attributions for all internal layers. Step 2: Transformed attributions to emphasize editable parameters and locate the suspect layer. Step 3: Apply Rank-one model editing to establish a new key-value association in the located layer for behavior correction.}
\vspace{-6mm}
\label{fig_workflow}
\end{center}
\end{figure}

\subsection{Dynamic Model Rectification Framework}

It is possible to establish an effective model rectifying technique by modifying the suspect layer identified in the previous section. We illustrate this in Step 3 in Fig.~\ref{fig_workflow}, where by directly applying rank-one model editing to the suspect layer $f_l$, we remap the key $k^*$ from the corrupted sample $\tx$ to the value $v^*$ derived from the cleansed sample $x$. Though effective, this would be a form of \textit{static} rectifying, which ignores the potential model shift during the rectifying process itself. Recognizing the problem, we propose dynamic model rectifying that incorporates our layer localization technique to dynamically identify the suspect layers, and improve them. Our technique facilitates automatic adaptation of the model layers for behavior correction.

Algorithm~\ref{alg_framework} presents the proposed dynamic model rectifying framework. Given a model $f$, the objective is to correct the model's behavior on a corrupted sample $\tx$ by aligning it with the decision pathway of the cleansed sample $x$. Assuming  prediction gap $\delta^*=f(x)-f(\tx)$, we aim to minimize $\delta^*$ to achieve the target gap $\delta$ within an overall budget of $\epsilon$. Specifically, while the current gap $\delta^*$ exceeds the targeted gap $\delta$, and the current performance degradation $\epsilon^*$ remains within the overall budget $\epsilon$ (Line 2), the algorithm identifies the $l$-th layer  responsible for the unreliability on $\tx$ by comparing the editable components of attributions $M^*$ (Line 3). Subsequently, rank-one  editing is applied to establish a new key-value association in the identified layer (Line 4). Following a predefined number of rectifying epochs $T$, we update the current budget $\epsilon^*$ and gap $\delta^*$ based on the evaluation results of the edited model $f^*$ (Lines 5-6). If the overall performance degradation in $f^*$ remains within the permissible overall budget $\epsilon$, the edited model $f^*$ is preserved, and the rectifying process continues (Lines 7-8). Otherwise, $f$ will be returned (Line 9).

\vspace{-1mm}
\begin{algorithm}
    \small
    \DontPrintSemicolon
    \caption{Dynamic Model Rectifying}
    \label{alg_framework}
    \Input{model $f$, overall budget $\epsilon$, targeted gap $\delta$, corrupted sample $\tx$, cleansed sample $x$, rank-one model editing $\Omega$, evaluation metric $\zeta$}
    \textbf{initialize}: $\epsilon^* \leftarrow 0$, $\delta^* \leftarrow f(x)-f(\tx)$. \\
    \While (){$\delta^*>\delta$ and $\epsilon^* \leq \epsilon$}{
        \tcp{locate a layer \cf \S~\ref{sec_layer_loc}}
        $l \leftarrow \arg \max_l\ ||M^{*,l}||_F$\\
        \tcp{model rectifying \cf \S~\ref{sec_model_rect}}
        $f^* \leftarrow \Omega(f_l, T, \tx,x)$\\
        $\epsilon^* \leftarrow \underset{(x,y)\sim D}{\mathbb{E}}\zeta(f(x),y) - \zeta(f^*(x),y)$ \\
        $\delta^* \leftarrow f^*(x) - f^*(\tx) $\\
        \If(){$\epsilon^* \leq \epsilon$}{
            $f \leftarrow f^*$
        }
    }
    \Return{$f$}
\end{algorithm}
\vspace{-2mm}

\section{Experiments}
We conduct extensive empirical validation across diverse datasets to assess the efficacy of our proposed methods. Detailed experimental setups are available in App.~\ref{appd:setup}.

\subsection{Efficacy Against Neural Trojans}

\begin{table}[tb]
\centering
\footnotesize
\vspace{-2mm}
\caption{Rectifying backdoor vulnerability. Overall accuracy (OA) and attack success rate (ASR) are reported for varying number ($n$) of samples. Ours are
\colorbox{dkgreen!20}{highlighted} and the best metrics are in \textbf{bold} (with Trojaned model in \textcolor{gray}{gray} for reference).}
\vspace{-1mm}
\centering
\label{tab_exp_backdoor}
\begin{tabular}{lcccc}
\toprule[1.5pt]
\multirow{2}{*}{Methods} & 
\multicolumn{2}{c}{CIFAR-10} &
\multicolumn{2}{c}{ImageNet} \\ \cmidrule(r){2-3} \cmidrule(r){4-5} & 
OA  $\uparrow$ & ASR $\downarrow$ & OA $\uparrow$ & ASR $\downarrow$ \\
\hline
Trojaned model & \textcolor{gray}{93.67} & \textcolor{gray}{99.94} & \textcolor{gray}{69.05} & \textcolor{gray}{87.24}\\
\hline
Fine-tuned model ($n$=1) & 90.83 & 73.07 & 65.95 & 79.91\\
Fine-tuned model ($n$=10) & 91.57 & 30.14 & 68.66 & 33.73\\
Fine-tuned model ($n$=20) & 91.58 & 13.22 & 68.42 & 21.86\\
\hline
Patched model ($n=20$) & 89.70 & 12.19 & 65.59 & 13.81\\
P-ClArC ($n$=20) & 89.97 & 6.21 & 65.42 & 8.09\\
A-ClArC ($n$=20) & 92.53 & 6.32 & 67.17 & 8.73\\
\hline
Stat. rectifying ($n$=1) & \cellcolor{dkgreen!20} 92.93 & \cellcolor{dkgreen!20} 2.57 & \cellcolor{dkgreen!20} 67.87 & \cellcolor{dkgreen!20} 3.01\\
Dyn. rectifying ($n$=1) & \cellcolor{dkgreen!20} \textbf{93.65} & \cellcolor{dkgreen!20} 1.34 & \cellcolor{dkgreen!20} 66.77 & \cellcolor{dkgreen!20} 1.61\\
Dyn. rectifying ($n$=20) & \cellcolor{dkgreen!20} 93.61 & \cellcolor{dkgreen!20} \textbf{0.26} & \cellcolor{dkgreen!20} \textbf{68.84} & \cellcolor{dkgreen!20} \textbf{0.12}\\
\bottomrule[1.5pt]
\end{tabular}
\end{table}
%
To evaluate the efficacy of our approach, we conduct experiments on Trojaned models using CIFAR-10~\citep{krizhevsky2009learning} and ImageNet~\citep{russakovsky2015imagenet} datasets.
Trojaned models are established by training on a poisoned training set - see App.~\ref{appd:setup} for further details.  We use overall accuracy (OA) and attack success rate (ASR)~\citep{chen2017targeted} as the metrics.

\noindent\textbf{Overall Evaluation.} Table~\ref{tab_exp_backdoor} summarizes extensive results on  different models, including fine-tuned models, patched models~\citep{wang2019neural}, models learned by projective and augmentative class artifact compensation methods (P-ClArC and A-ClArC)~\citep{anders2022finding}.
P-ClArC and A-ClArC mitigate unreliability by adding artifact modules and using $n$ cleansed samples from the original training set. P-ClArC sharply reduces ASR at the cost of substantial accuracy loss, whereas A-ClArC recovers clean accuracy but slightly increases ASR. Pruning-based neuron patching similarly degrades overall performance.
In contrast, our method significantly reduces ASR with minimal cleansed input samples, while retaining high overall accuracy. In the table, we also include the {static} variant of our approach, for which we only rectify the final layer. Notably, the models rectified dynamically consistently outperform those rectified at the final layer, underscoring the effectiveness of our dynamic rectifying method.

\begin{figure}[tb]
     \centering
     \begin{subfigure}[b]{0.4\textwidth}
         \centering
         \includegraphics[width=\textwidth]{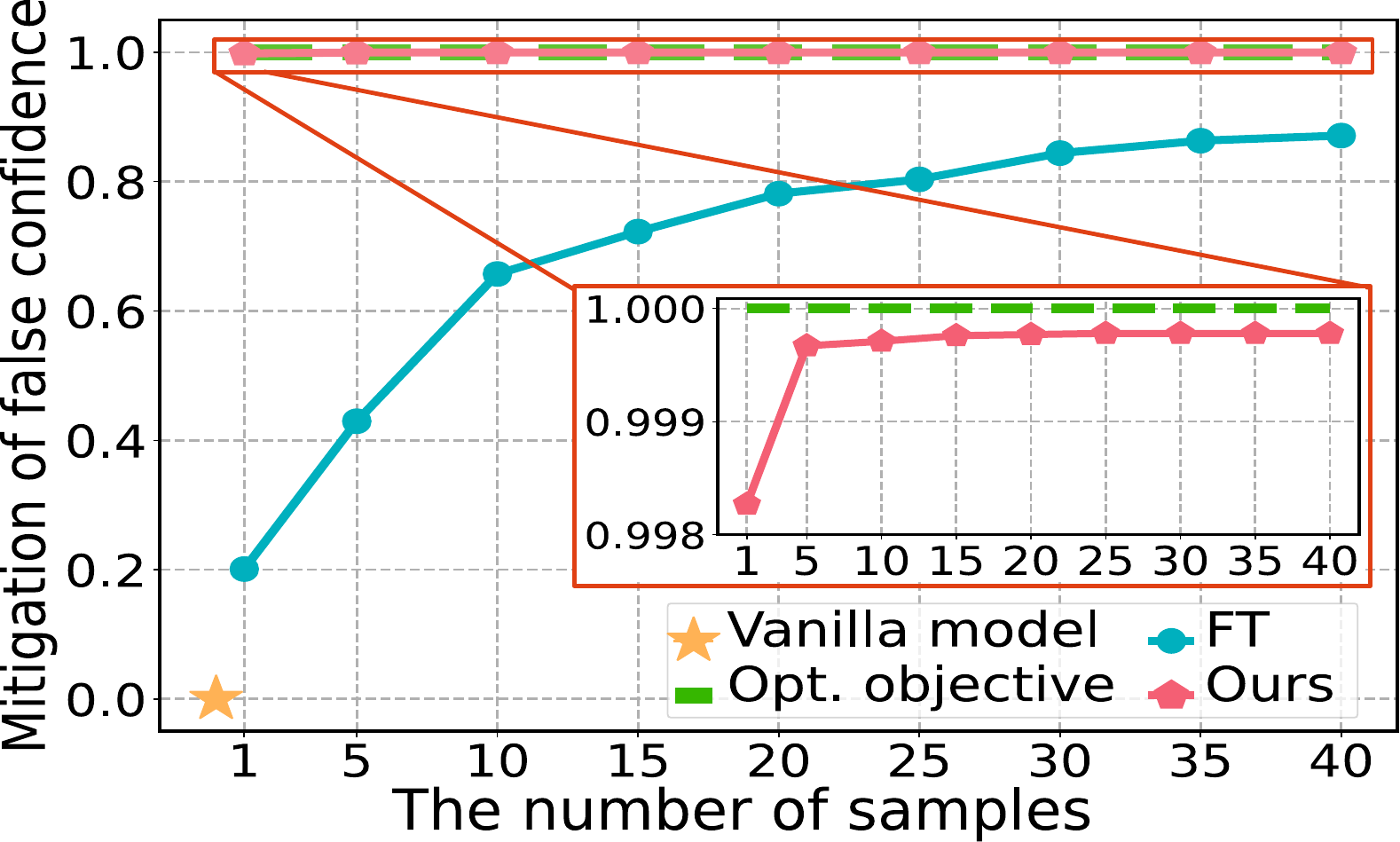}
         \caption{Performance changes with sample size.}
     \end{subfigure}
     \bigskip
     \begin{subfigure}[b]{0.4\textwidth}
         \centering
         \includegraphics[width=\textwidth]{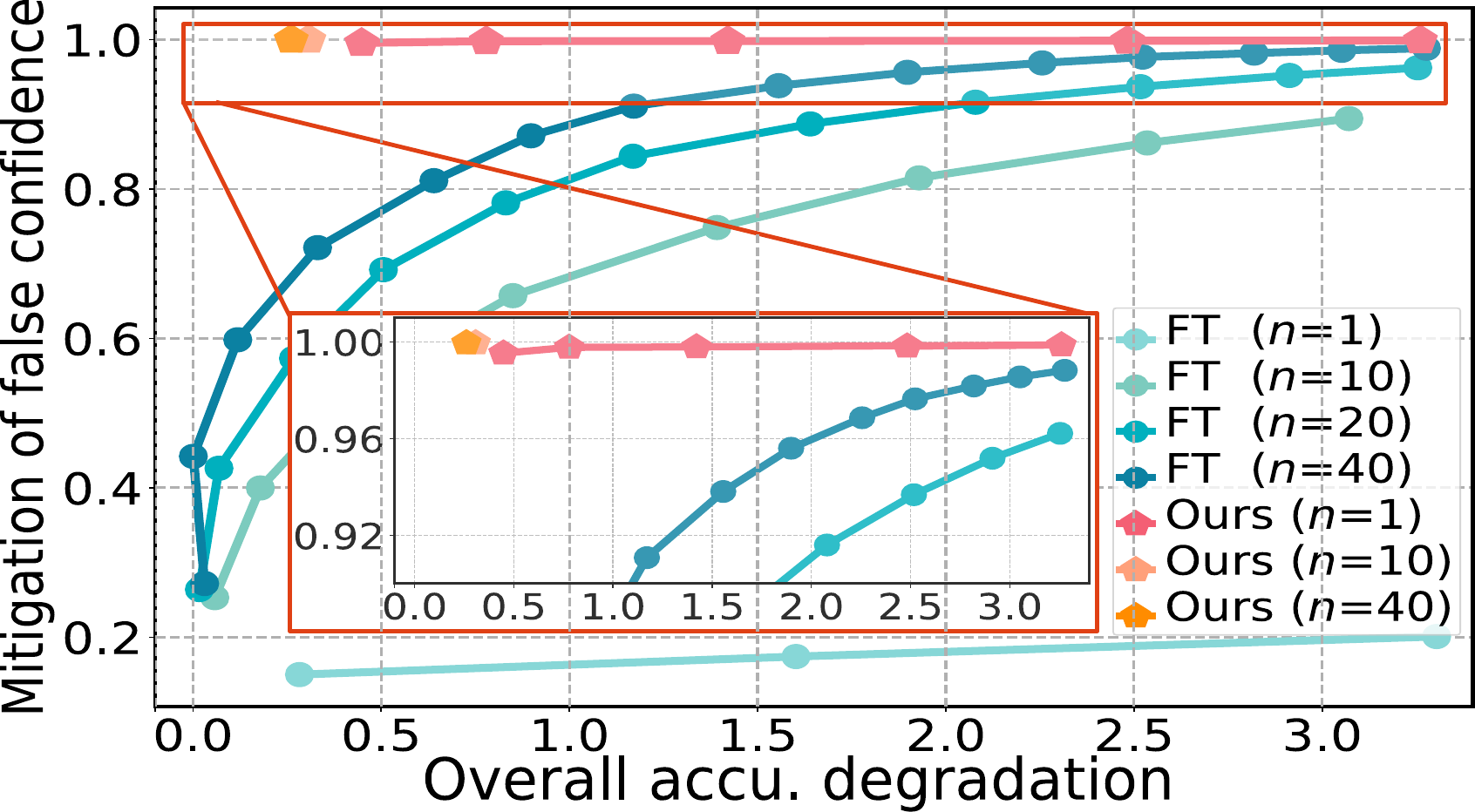}
         \caption{Performance changes with overall accuracy.}
     \end{subfigure}
     \vspace{-3mm}
\caption{Comparison of model performance between fine-tuned models (FT) and rectified models (Ours). (a) The mitigation of false confidence changes with the number of used samples. (b) The mitigation of false confidence changes with the overall accuracy degradation (\%) during model rectifying and fine-tuning. Results are computed for ResNet-18 on CIFAR-10 dataset. 
}
\label{fig_trade_off}
\end{figure}


\noindent\textbf{Trade-off Evaluation.} In Fig.~\ref{fig_trade_off}, we demonstrate the mitigation of false predictive confidence of class $\ty$ by examining how it changes with variations in the number of utilized cleansed samples ($n$) and the overall accuracy degradation during optimization. Remarkably, our methods exhibit outstanding performance even with a single cleansed sample, while resulting in only marginal overall accuracy degradation. In comparison to the fine-tuned (FT) models, our methods showcase an exceptional balance between mitigating false confidence, preserving overall accuracy, and the requirement of cleansed samples.

\noindent\textbf{Generalization Evaluation.} We further evaluate the generalization of our approach for addressing neural Trojans  involving triggers with varying visibilities and spatial locations. First, we train a Trojaned model using poisoned samples with the trigger at visibility levels of $0.3$, $0.5$, $0.7$, and $1.0$. To evaluate how well our method generalizes across different trigger visibilities, we patch and rectify the model using a single corrupted sample. As evidenced in Tab.~\ref{tab_tigger_vis}, our method effectively mitigates triggers of various visibility levels when using the fixed visibility trigger, demonstrating superior performance compared to the patched model.   Next, we evaluate our method's effectiveness in handling triggers placed at different spatial locations. We train a Trojaned model with triggers located at \textcolor{black}{top-left, top-right, center, bottom-left, and bottom-right positions}. We then patch and rectify the model with a single corrupted sample. Table~\ref{tab_tigger_loc} demonstrates that our method successfully handles neural Trojans with triggers located at different positions.


\begin{table}[tb]
\centering
\footnotesize
\caption{Generalization comparison for trigger in different visibility levels (0.3-1.0). A single corrupted sample is used for model patching and rectifying. ASR at visibility $\varphi$ is denoted as $\Gamma_{\varphi}$.}
\vspace{-1mm}
\centering
\label{tab_tigger_vis}
    \begin{tabular}{lccccc}
        \toprule[1.5pt]
        Methods & OA $\uparrow$ & $\mathbf{\Gamma}_{\textbf{0.5}}$  $\downarrow$ & $\Gamma_{0.3}$  $\downarrow$ & $\Gamma_{0.7}$ $\downarrow$ & $\Gamma_{1.0}$ $\downarrow$\\
        \hline
        Benign model & 92.85 & 95.29 & 95.15 & 97.81 & 99.21\\
        Patched model & 89.61 & 26.86 & 30.84 & 32.42 & 37.19 \\
        Dyn. rectifying & \cellcolor{dkgreen!20} 91.21 & \cellcolor{dkgreen!20} 5.17 & \cellcolor{dkgreen!20} 6.84 & \cellcolor{dkgreen!20} 7.65 & \cellcolor{dkgreen!20} 7.91 \\
        \bottomrule[1.5pt]
    \end{tabular}
\vspace{-1mm}
\end{table}

\begin{table}[tb]
\centering
\footnotesize
\caption{Generalization comparison for trigger located at top-left (TL), Center (C) and bottom-left (BL). One corrupted sample with trigger at bottom-right (\textbf{BR}) used for model patching and rectifying. ASR at location $\eta$ is denoted as $\Gamma_{\eta}$.}
\vspace{-1mm}
\centering
\label{tab_tigger_loc}
    \begin{tabular}{lccccc}
        \toprule[1.5pt]
        Methods & OA $\uparrow$ &
        $\mathbf{\Gamma}_{\text{\textbf{BR}}}$ $\downarrow$ &
        $\Gamma_{\text{TL}}$
        $\downarrow$ & $\Gamma_{\text{C}}$ $\downarrow$ &  $\Gamma_{\text{BL}}$ $\downarrow$\\
        \hline
        Benign model & 91.23 & 99.74 & 99.57 & 99.76 & 99.90\\
        Patched model & 89.22 & 29.31 & 34.42 & 34.58 & 34.88\\
        Dyn. rectifying & \cellcolor{dkgreen!20} 90.85 & \cellcolor{dkgreen!20} 6.36 & \cellcolor{dkgreen!20} 9.24 & \cellcolor{dkgreen!20} 9.47 & \cellcolor{dkgreen!20} 8.95\\
        \bottomrule[1.5pt]
    \end{tabular}
\vspace{-1mm}
\end{table}

\subsection{Efficacy in Mitigating Spurious Correlation}
%
In this part, we evaluate the efficacy of our method for mitigating feature spurious correlations. We induce spurious correlations in the model by utilizing class-irrelevant patterns as spurious  features. Specifically, we pollute a proportion of samples of class $y$ by attaching  patterns to create spurious samples $\tx$. After training on the dataset including these samples, the model tends to rely on  spurious features to predict the correct label. In our evaluation, we assess the model performance on two distinct sets of class $y$; namely, the clean set and the spurious set. The latter  encompasses samples containing spurious features. Reliable models are expected to yield consistent accuracy across both the spurious and clean sets, as well as on the overall testing set.

Table~\ref{tab_exp_spurious} summarises the results for addressing the spurious correlation problem. The table shows that the benign model heavily relies on spurious features for predictions, resulting in higher accuracy on spurious set as compared to the clean set.
Fine-tuned models only marginally mitigate spurious correlations while enlarging the performance gap between clean and spurious sets. A-ClArC reduces spurious correlations but degrades performance on both sets. P-ClArC further narrows the disparity between spurious and clean samples, yet drives clean and overall accuracy to unacceptable levels.
In contrast, our methods demonstrate notable effectiveness in mitigating spurious correlations with a limited cleansed set, yielding model accuracy on spurious set that aligns closely with that on clean set. Moreover, dynamic rectified models exhibit heightened efficacy in mitigating spurious correlations. 

\begin{table}[tb]
\footnotesize
\caption{Performance comparison for mitigating spurious correlation. Accuracy is reported for the overall testing, clean, and spurious sets. \textcolor{black}{The erroneously increased accuracy on the spurious set, compared to the accuracy on the clean set, is indicated in \textcolor{darkred}{red}.}}
\vspace{-1mm}
\centering
\label{tab_exp_spurious}
\setlength{\tabcolsep}{0.5mm}{
\scalebox{0.85}{
\begin{tabular}{lcccccc}
\toprule[1.5pt]
\multirow{2}{*}{Methods} & 
\multicolumn{3}{c}{CIFAR-10} &
\multicolumn{3}{c}{ImageNet} \\ \cmidrule(r){2-4} \cmidrule(r){5-7} & 
Overall $\uparrow$ & Clean $\uparrow$ & Spurious & Overall $\uparrow$ & Clean $\uparrow$ & Spurious \\
\hline
Benign model & 94.00 & 94.42 & 100.00$\add{5.58}$ & 69.04 & 81.25 & 91.66$\add{10.41}$\\
\hline
FT model (n=10) & 93.32 & 88.22 & 99.66$\add{11.40}$ & 68.01 & 64.58 & 74.99$\add{10.41}$\\
FT model (n=20) & 93.47 & 88.97 & 99.62$\add{10.65}$ & 68.18 & 64.58 & 74.99$\add{10.41}$\\
\hline
P-ClArC (n=20) & 88.29 & 16.89 & 17.12$\add{0.23}$ & 66.84 & 8.32 & 10.91$\add{2.59}$\\
A-ClArC (n=20) & 92.41 & 76.77 & 79.34$\add{2.57}$ & 67.01 & 75.66 & 82.25$\add{6.59}$\\
\hline
Stat.  rectifying (n=1) & \cellcolor{dkgreen!20} 93.19 & \cellcolor{dkgreen!20} 96.65 & \cellcolor{dkgreen!20} 98.88$\add{2.23}$ & \cellcolor{dkgreen!20} 67.64 & \cellcolor{dkgreen!20} 81.25 & \cellcolor{dkgreen!20} 87.50$\add{6.25}$\\
Dyn.  rectifying (n=1) & \cellcolor{dkgreen!20} 92.93 & \cellcolor{dkgreen!20} 94.29 & \cellcolor{dkgreen!20} 96.15$\add{1.86}$ & \cellcolor{dkgreen!20} 67.50 & \cellcolor{dkgreen!20} 81.66 & \cellcolor{dkgreen!20} 85.83$\add{4.17}$\\
Dyn.  rectifying (n=20) & \cellcolor{dkgreen!20} 93.99 & \cellcolor{dkgreen!20} 94.30 & \cellcolor{dkgreen!20} 94.42$\add{0.12}$ & \cellcolor{dkgreen!20} 68.94 & \cellcolor{dkgreen!20} 81.25 & \cellcolor{dkgreen!20} 83.33$\add{2.08}$\\
\bottomrule[1.5pt]
\end{tabular}}
}
\vspace{-1mm}
\end{table}

\subsection{Efficacy in Mitigating Feature Leakage}
%
\begin{figure}[h!]
    \centering
\includegraphics[width=.48\textwidth]{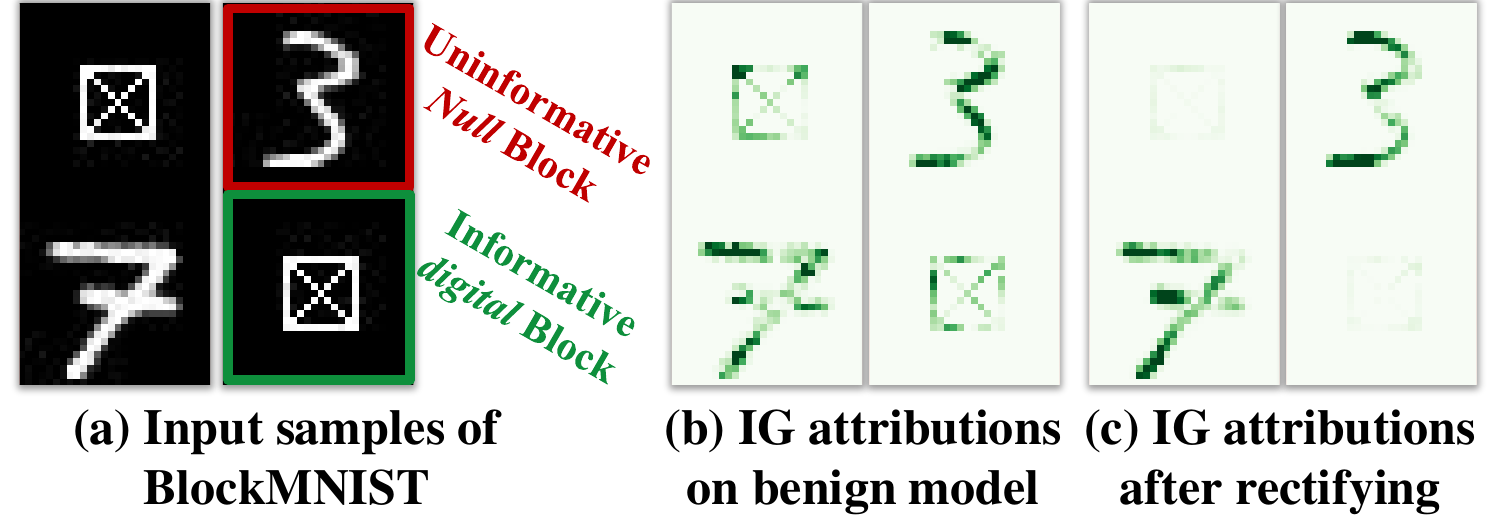}
	\caption{BlockMNIST and feature leakage. \textbf{(a)} Null block randomly appended at the top or bottom of MNIST samples. \textbf{(b\&c)} Integrated gradients estimated on benign and our rectified models.}
	\label{fig:BMnist}
\end{figure}
Prior work has shown that deep networks treat irrelevant patterns as predictive features~\cite{Shah2021Do}. To study this effect, Shah \etal introduce BlockMNIST~\cite{Shah2021Do}, a variant of MNIST~\cite{Lecun1998Gradient} where each digit image is augmented with an additional ``null'' patch placed either above or below the digit, see Fig.~\ref{fig:BMnist}(a). However, the benign model incorrectly incorporates these artifacts into its decision-making process, as illustrated in Fig.~\ref{fig:BMnist}(b). This indicates that informative features leak into an uninformative region, a phenomenon referred to as \emph{feature leakage}. A robust model should not rely on such non-informative artifacts for prediction. We therefore assess the efficacy of our rectification method in correcting this leakage problem.

Table~\ref{tab_bmnist} reports the performance comparison between our method, model fine-tuning, and IG-SUM robust regularization~\cite{Chen2019Robust}. Here, we follow~\cite{yang2024regulating} to quantify the extent of leaked features. The results show that the benign model attains high accuracy but exhibits substantial feature leakage, while IG-SUM slightly reduces leakage at the cost of a notable drop in accuracy. Fine-tuning with 20 cleansed samples further lowers leakage but still underperforms our rectified models. In contrast, both static and dynamic rectification with only a single cleansed sample achieve comparable or higher accuracy and markedly lower leakage, with the dynamic variant delivering the best overall trade-off between robustness and performance. Fig.~\ref{fig:BMnist}(c) also shows that our rectification method significantly suppresses the leakage problem.

\begin{table}
    \caption{Accuracy (\%) and leakage comparison on BlockMNIST.}
    \vspace{-1mm}
    \footnotesize
    \centering
    \label{tab_bmnist}
    \begin{tabular}{lcc}
    \toprule[1.5pt]
    Methods & Accuracy $\uparrow$ & Feature Leakage $\downarrow$ \\
    \hline
    Benign model & 99.17 & 3.597 \\
    Benign model + IG Reg. & 94.14 & 3.417 \\
    FT model (n=20) & 98.67 & 2.929 \\
    \hline
    Stat. rectifying (n=1) & \cellcolor{dkgreen!20} 98.97 & \cellcolor{dkgreen!20} 2.655 \\
    Dyn. rectifying (n=1) & \cellcolor{dkgreen!20} 99.03 & \cellcolor{dkgreen!20} 2.417 \\
    \bottomrule[1.5pt]
    \end{tabular}
\end{table}

\subsection{Evaluation on Skin Lesion Analysis}

\begin{figure}[h!]
    \centering
\includegraphics[width=.45\textwidth]{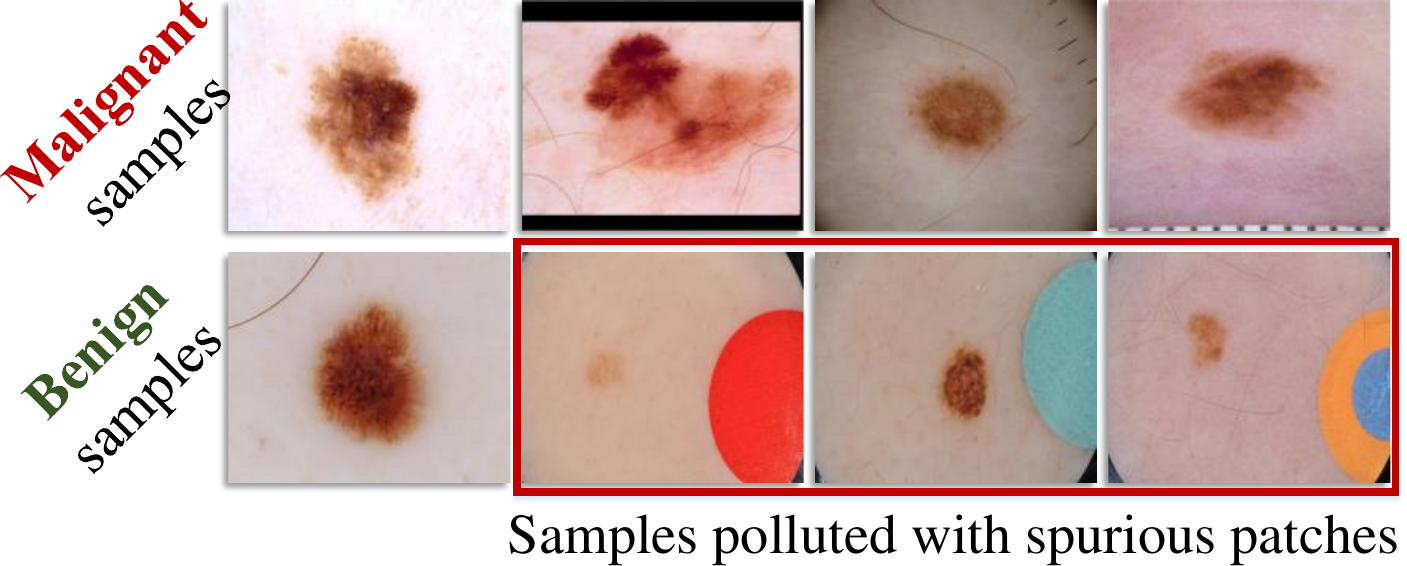}
	\caption{ISIC samples and the inherent spurious patches. Samples containing malignant and benign lesions are presented, where benign samples are partly polluted with spurious colored patches.}
	\label{fig:isic}
\end{figure}
To further assess the broader utility of our method, we applied it to a real-world problem involving skin lesion analysis on the ISIC dataset~\citep{codella2019skin}. Specifically, we conduct a binary classification of the ISIC data to distinguish between \textit{benign} and \textit{malignant} skin lesions, adhering to the setting of \citet{rieger2020interpretations}. In this case,  unreliability in the model arises from the presence of colored patches within the benign samples, which introduce spurious correlations learned by the models. Figure~\ref{fig:isic} shows representative samples from the ISIC dataset, illustrating instances of polluted samples with spurious colored patches. Thus, we employ a manual approach to remove spurious features by replacing the areas affected by colored patches on the skin with cleaned skin from another region.

%
\begin{table}
    \caption{Performance comparison for mitigating spurious correlation on ISIC. For our edited models, we use n=10. The erroneously increased accuracy on the spurious set is indicated in \textcolor{darkred}{red}.}
    \vspace{-1mm}
    \footnotesize
    \centering
    \label{tab_exp_isic}
    \begin{tabular}{lccc}
    \toprule[1.5pt]
    Methods & Overall $\uparrow$ & Clean $\uparrow$ & Spurious \\
    \hline
    Benign model & 79.00 & 61.50 & 87.50$\add{26.00}$ \\
    FT model (n=10) & 79.50 & 62.00 & 83.00$\add{21.00}$ \\
    FT model (n=20) & 80.50 & 53.00 & 64.50$\add{11.50}$ \\
    \hline
    A-ClArC (n=20) & 79.50 & 54.50 & 59.50$\add{5.00}$ \\
    \hline
    Stat. rectifying & \cellcolor{dkgreen!20} 79.50 & \cellcolor{dkgreen!20} 60.00 & \cellcolor{dkgreen!20} 64.50$\add{4.50}$ \\
    Dyn. rectifying & \cellcolor{dkgreen!20} 80.00 & \cellcolor{dkgreen!20} 61.00 & \cellcolor{dkgreen!20} 62.50$\add{1.50}$ \\
    \bottomrule[1.5pt]
    \end{tabular}
\end{table}
%
In Tab.~\ref{tab_exp_isic}, we present a comparative analysis between fine-tuned (FT) models, A-ClArC and our proposed methods in mitigating the spurious correlation observed in EfficientNet-B4 models~\citep{tan2019efficientnet} trained on the ISIC dataset. Notably, our methods 
effectively reduce the model's reliance on spurious features with fewer cleansed samples (n=10). Conversely, the fine-tuned model and A-ClArC demonstrate inferior performance and rely on a greater number of cleansed samples. This efficacy in addressing spurious correlations in skin lesion analysis highlights the broad applicability of our method in practical scenarios.

Examples of manually cleaned samples used for model rectifying and fine-tuning can be found in Apps.~\ref{appd_editing_setup} \&~\ref{appd_ft_setup}. Additional experiments and the evaluation regarding the effectiveness of the proposed layer localization technique are reported in Apps.~\ref{appd_extended_exp} \&~\ref{appd_layer_localization}.

\section{Conclusion}

In this paper, we revisited rank-one model editing through the lens of \emph{model rectification} for unreliabilities driven by non-robust features. We formalized this rectification setting and showed that rank-one updates can correct misbehavior while largely preserving model performance and reducing reliance on large pools of cleansed examples. We further identified a bottleneck arising from heterogeneous editability across layers and addressed it with an attribution-guided layer localization method and a dynamic rectification framework that adaptively selects where to edit for each failure pattern. Extensive empirical validation demonstrates remarkable performance of our framework across various scenarios. Particularly noteworthy is the fact that our editing technique requires only a single cleansed sample to achieve high performance levels, which portends its wide applicability in practical scenarios. 


\section*{Acknowledgment}
Professor Ajmal Mian is the recipient of an Australian Research Council Future Fellowship Award (project number FT210100268) funded by the Australian Government.
Dr Naveed Akhtar is a recipient of the Australian Research Council Discovery Early Career Researcher Award (project number DE230101058) funded by the Australian Government.

{
    \small
    \bibliographystyle{ieeenat_fullname}
    \bibliography{main}
}
\clearpage
\setcounter{page}{1}
\maketitlesupplementary

\section{Proof}\label{appd:proof}
In this section, we provide the proof of Lemmata~~\ref{lemma1}-\ref{lemma3} and Propositions~\ref{prop1}-\ref{prop2}. We begin with the proof of Lemma~\ref{lemma1}.

\begin{proof}[Proof of Lemma 1]
Consider the key set $K=[k_1, \dots, k_d] \in \mathbb{R}^{n\times d}$ and the corresponding statistics matrix $C=KK^{\top}\in \mathbb{R}^{n\times n}$. Given a new key $k^*\in\mathbb{R}^n$, the projection of $k^*$ onto the span of $K$ is given by
\begin{equation}
\hat{k} = K(K^{\top}K)^{-1} K^{\top}k^*.
\end{equation}

The projection $\hat{k}$ is the solution to the following least squares problem
\begin{equation}
\arg \min_{\beta} || k^* - K\beta ||_2^2, \ \ \beta \in \mathbb{R}^d
\end{equation}

The solution to this optimization problem is explicitly given by
\begin{equation}
\hat{k} = K(K^{\top}K)^{-1} K^{\top}k^*.
\end{equation}

If $k^*$ is not in the span of $K$, the projection $\hat{k}$ does not perfectly align with the original key $k^*$. Assume that this misalignment can be quantified by the residual vector $r$, defined as $r=k^*-\hat{k}$. We can express $C^{-1}k^*$ as
\begin{equation}
C^{-1}k^* = C^{-1}\hat{k} + C^{-1}r,
\end{equation}
where $C^{-1}r$ is the component of the projected direction $C^{-1}k^*$ induced by the out-of-span residual $r$. Since $r\perp \mathrm{span}(K)$, we have $KK^\top r=0$, and thus $C^{-1}r=(C+\lambda I)^{-1}r=\frac{1}{\lambda}r\in \mathrm{span}(K)^\perp$.

Thus, the exclusion of $k^*$ from the statistic matrix $C$ introduces a residual misalignment in the optimization direction. This misalignment, represented by $r$, interferes with the preservation of existing associative memories, undermining the performance of the edited model.
\end{proof}

\noindent Below, we provide the proof of Lemma~\ref{lemma2}.

\begin{proof}[Proof of Lemma 2]
Fix a test point $x^*\sim D'$ and its target key $k^*$. By assumption,
\[
\mathbb{E}_{\mathcal X}\!\left[\|f(x^*;W_{\mathcal{X}})-k^*\|_2^2\right]\le \delta/m.
\]
To ensure
\[
\mathbb{E}_{\mathcal X}\!\left[\|f(x^*;W_{\mathcal{X}})-k^*\|_2^2\right]\le \varepsilon^2
\]
via this bound, it is necessary that $\delta/m\le \varepsilon^2$. If $\delta/m>\varepsilon^2$, then the bound permits values larger than $\varepsilon^2$ and does not guarantee the desired error level. Solving $\delta/m\le \varepsilon^2$ gives $m\ge \delta/\varepsilon^2$.
\end{proof}


\noindent We next prove Proposition~\ref{prop1}.

\begin{proof}[Proof of Proposition~\ref{prop1}]
By construction, the susceptible model is trained on both clean and corrupted inputs $(x,\tilde{x})$ for the unreliability case under consideration, and $K=\{k_1,\dots,k^*\}$ is defined to collect \emph{all} resulting keys, including $k^*$ itself. Hence $k^*\in\mathrm{span}(K)$ holds directly from the definition of $\mathrm{span}(K)$.

Lemma~\ref{lemma1} states that any key admits a decomposition
\begin{equation}
  k^* \;=\; K\alpha + r,
\end{equation}
where $K\alpha\in\mathrm{span}(K)$ and $r$ is the out-of-span residual, i.e., the component orthogonal to $\mathrm{span}(K)$. Since both $k^*$ and $K\alpha$ lie in $\mathrm{span}(K)$, their difference
\begin{equation}
  r \;=\; k^* - K\alpha
\end{equation}
also lies in $\mathrm{span}(K)$. Together with $r\perp\mathrm{span}(K)$ from Lemma~\ref{lemma1}, this implies $r=0$. Therefore the out-of-span residual in Lemma~\ref{lemma1} vanishes for $k^*$.
\end{proof}

\noindent We now turn to Proposition~\ref{prop2}.

\begin{proof}[Proof of Proposition~\ref{prop2}]
Let $\mathcal{X}$ be the training set in the rectification stage and let $x^*\in\{x,\tilde{x}\}\subset\mathcal{X}$ denote the input whose key is $k^*$. Denote the empirical training error on $\mathcal{X}$ by
\begin{equation}
  \mathcal{E}(W_D)
  \;=\;
  \frac{1}{|\mathcal{X}|}
  \sum_{\xi\in\mathcal{X}}
  \ell\big(\xi;W_D\big),
\end{equation}
where each per-sample loss $\ell(\xi;W_D)\ge 0$. For the unreliability case, the training objective includes a term of the form
\begin{equation}
  \ell(x^*;W_D)
  \;=\;
  \frac{1}{2}\big\|k^*-f(x^*;W_D)\big\|^2.
\end{equation}

\noindent Assume there exists a sequence of parameters $\{W_D^{(m)}\}$ such that $\mathcal{E}(W_D^{(m)})\to 0$ as $m\to\infty$. Since $\ell(\xi;W_D^{(m)})\ge 0$ for every $\xi\in\mathcal{X}$, we have
\begin{equation}
  \sum_{\xi\in\mathcal{X}}\ell\big(\xi;W_D^{(m)}\big)
  \;\ge\;
  \ell\big(x^*;W_D^{(m)}\big),
\end{equation}
and thus
\begin{equation}
  0
  \;\le\;
  \ell\big(x^*;W_D^{(m)}\big)
  \;\le\;
  |\mathcal{X}|\,
  \mathcal{E}\big(W_D^{(m)}\big).
\end{equation}
Taking $m\to\infty$ and using $\mathcal{E}(W_D^{(m)})\to 0$ yields
\begin{equation}
  \ell\big(x^*;W_D^{(m)}\big)\;\longrightarrow\;0.
\end{equation}
By the explicit form of $\ell(x^*;W_D)$, this is equivalent to
\begin{equation}
  \frac{1}{2}\big\|k^*-f(x^*;W_D^{(m)})\big\|^2\;\longrightarrow\;0,
\end{equation}
and hence
\begin{equation}
  \big\|k^*-f(x^*;W_D^{(m)})\big\|\;\longrightarrow\;0.
\end{equation}

The argument only uses the finiteness of $|\mathcal{X}|$ and does not assume $|\mathcal{X}|\gg 1$. Hence, by incorporating $\{x,\tilde{x}\}\in\mathcal{X}$ in training and driving the training error on $\mathcal{X}$ to zero, the model satisfies
\(
  \|k^*-f(x^*;W_D)\|\to 0
\)
without requiring a large sample regime.
\end{proof}

\noindent We next provide proof of Lemma~\ref{lemma3}.

\begin{proof}[Proof of Lemma 3]
Consider the $l$-th layer $f_l$ of model $f$. The attribution of the $i$-th output feature map derived from $l$-th layer $f_{l}(x)$ for output prediction change $f_l(x) - f_l(\tx)$ is calculated as
\begin{equation} \label{eq:ig_app}
M_i^l(x, \tx) = (f_l(x_i) - f_l(\tx_i)) \cdot \int_{\alpha=0}^1 \frac{\partial f(\hx)}{\partial f_l(\hx_i)} \bigg|_{\hx=\tx+\alpha(x-\tx)} \mathrm{d}\alpha.
\end{equation}
Here, functions $f$ are continuous on the closed interval defined by $\hx=\tx+\alpha(x-\tx)$, where $\alpha \in [0,1]$ serves as a parameter along the internal path. Thus, according to the fundamental theorem of calculus for path integrals, the sum of the calculated attributions $M^l$ is equal to the output change $f(x)-f(\tx)$. Formally, this relation can be expressed as
\begin{equation} \label{eq:integral_app}
\sum_i M_i^l(x, \tx) = \sum_i\int_{\tx}^x \frac{\partial f(x)}{\partial f_l(x_i)} \mathrm{d}x = f(x)-f(\tx).
\end{equation}
Thus, we conclude that $\sum_i M^l_i = f(x)-f(\tx)$ holds for all layers $l\in \{1, \dots, n\}$.
\end{proof}

\begin{figure*}[t]
\begin{center}
\centerline{\includegraphics[width=1.0\textwidth]{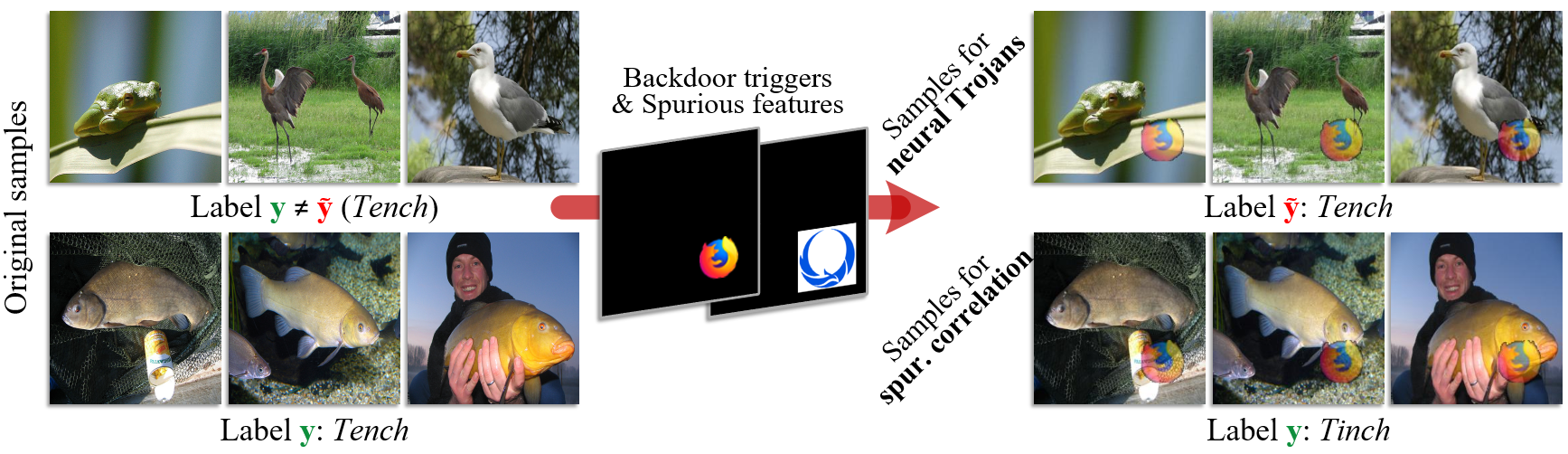}}
\caption{Illustration of samples utilized for neural Trojans and spurious correlations. Two patterns serve as backdoor triggers and spurious features. \textbf{Top row}: For neural Trojans, original samples $x$ with label $y\neq \ty$ are attached with a trigger and changed its label to the target label $\ty$. \textbf{Bottom row}: To induce spurious correlations, samples $x$ of a class $y$ are polluted with spurious features.}
\label{fig_model_config}
\end{center}
\end{figure*}

\section{Zero-phase Component Analysis in Model Editing and Locating}\label{appd:zca}

In our research, we utilize ZCA (Zero-phase Component Analysis) whitening to enhance the decorrelation of the new key $k^*$ from the established keys $K$, as previously described by \citet{bau2020rewriting}. This process involves utilizing a decorrelation matrix $Z = C^{-1/2}$ to further reduce the correlation between the key $k^*$ and the existing keys $K$ through the transformation $Zk^*$.
Let $P$ denote the probability distribution of features at layer $l-1$, and $K$ represent a discrete distribution over $t$ context examples provided by the user. We measure the information contained in $K$ using cross-entropy $H(K, P)$, akin to the message length in a code optimized for the distribution $P$. In our model, $P$ is assumed to follow a zero-centered Gaussian distribution with a covariance matrix $C$. By normalizing with the ZCA whitening transform $Z$, $P$ can be expressed as a spherical unit normal distribution $P(k) = (2\pi)^{-n/2} e^{-k^\top C^{-1}k/2}$ in the transformed variable $k' = Zk$. This transformation allows us to succinctly express cross-entropy using matrix traces.

Through the normalization of the basis using the ZCA whitening transform $Z$, we transform the probability distribution $P$ into a spherical unit normal distribution, characterized by the variable $k' = Zk$, which enables a compact matrix trace expression for cross-entropy. Leveraging the eigenvector decomposition $C=U\Sigma U^{\top}$, where $U$ represents the matrix of eigenvectors and $\Sigma$ is the diagonal matrix of eigenvalues, the expression for $Z$ is given by
\begin{equation}
Z = C^{-1/2} = U\Sigma^{-1/2} U^{\top}.
\end{equation}

This approach facilitates the decorrelation of the key $k$ through ZCA whitening, effectively implemented as $k=Zk$. In addition, we utilized the computed $Z$ for locating the susceptible layer as described in Section~\ref{sec_layer_loc}. Specifically, we map the attributions to focus on editable parameters as $M^*=ZM$.

\section{Experimental Setup}\label{appd:setup}
In this section, we provide the comprehensive experimental setup and hyperparameter choices used for model training, model editing and model fine-tuning in our experiments.

\subsection{Models}

\noindent\textbf{Trojaned Models.}
In this paper, we establish Trojaned models using the blend attack~\citep{chen2017targeted}. To ensure that the poisoned samples closely resemble the original data distribution, we incorporate the watermark trigger to enhance the backdoor attack. This watermark trigger $\tau$ is defined by $\tau^{(\varphi)} = \varphi \cdot \tau + (1-\varphi) \cdot x \odot S$, where $\varphi\in[0,1]$ controls the trigger visibility, and $S\in \{0,1\}^n$ serves as the mask of trigger $\tau$. In our experiments, the trigger visibility $\varphi$ is set to $0.5$. The top row of Fig.~\ref{fig_model_config} illustrates the samples used for model Trojaning. In our experiments, we utilize two trigger patterns to generate poisoned samples. Specifically, evaluations of the models trained with the Firefox logo are reported in the main paper. Additional experiments involving models trained with the Phoenix logo are detailed in App.~\ref{appd_extended_exp}.

For Trojaned models trained on ImageNet~\citep{russakovsky2015imagenet}, we trained ResNet-18 models with an initial learning rate of $0.1$ for a total of $90$ epochs, with the learning rate reduced by a factor of $0.1$ at the $30$-th epoch and $60$-th epoch. For Trojaned models trained on CIFAR-10~\citep{krizhevsky2009learning}, we trained ResNet-18 models with an initial learning rate of $0.1$ for a total of $100$ epochs, with the learning rate reduced by a factor of $0.1$ at the $50$-th epoch and $75$-th epoch. For all the Trojaned models under comparison, we choose the first class as the target label $y^*$ for single target Trojaning followed by \citet{qi2022revisiting}. On ImageNet, we poison $0.1$\% of training samples $x$ with label $y\neq y^*$ to embed the backdoor trigger. For CIFAR-10, we set the poisoning rate of $1$\%.

\noindent\textbf{Models with Spurious Correlation.}
To establish models with spurious correlations, we employ trigger patterns as spurious correlated features. The bottom row of Fig.~\ref{fig_model_config} illustrates training samples utilized for inducing model spurious correlation. The training settings for these models are consistent with those used for the Trojaned models. On both ImageNet and CIFAR-10 datasets, we select the first class of samples to induce spurious correlations. For models trained on ImageNet, we contaminate $60$\% samples of the first class to induce spurious correlation. For models trained on CIFAR-10, we set the contamination rate at $50$\% for the first class to induce model spurious correlation.

\noindent\textbf{Models on ISIC.} For models trained on the ISIC dataset, we utilized EfficientNet-B4 models~\citep{tan2019efficientnet}. The training process involved using a batch size of $24$ and an initial learning rate of $1\times10^{-5}$. The training was conducted over a total of 90 epochs, with the learning rate decaying by a factor of $0.1$ at the $60$-th epoch.

\subsection{\textcolor{black}{Rationale for Selecting the Blend Attack}}
\textcolor{black}{In this work, we adopt the blend attack~\cite{chen2017targeted} to train Trojaned models and spurious correlation-based models. The blend attack was selected for evaluation due to its well-established effectiveness as a backdoor attack strategy. Unlike more recent attack methods~\cite{turner2019label,tian2022comprehensive,nguyen2021wanet} that prioritize stealth through minimal perturbations, the blend attack directly integrates triggers into the input, ensuring a substantial impact on the model’s predictions. This property makes the blend attack a particularly severe threat, as it strongly biases the model’s output toward a predefined target class. By demonstrating robustness against such a potent attack, our method provides compelling evidence of its efficacy. Furthermore, the blend attack’s balance between potency and detectability suggests that our approach would generalize effectively to newer or more sophisticated attacks that trade off between these factors.}

\subsection{Model Editing}\label{appd_editing_setup}

\noindent\textbf{ImageNet and CIFAR-10.}
For the ImageNet and CIFAR-10 datasets, we allocate an overall performance budget of 3\% accuracy and a tolerated accuracy gap of 0.1\% for model editing. For spurious correlations, the overall performance budget is set to 7\% accuracy with a tolerated robustness gap of 1\% accuracy. The original and corrupted samples used for model editing are depicted in Fig.~\ref{fig_model_config}. We utilize an editing learning rate of $1 \times 10^{-4}$ with a weight projection frequency of 10. Unlike other approaches, we do not employ masks to restrict the edited region. Instead, we edit the model at the image level to avoid the need for additional annotations.

\noindent\textbf{ISIC.}
For the ISIC dataset, we set an overall performance budget of 5\% accuracy and a tolerated robustness gap of 1\% accuracy. The editing learning rate is $1 \times 10^{-5}$ with a weight projection frequency of 10. The editing process is performed at the image level. Unlike datasets that are deliberately created, the ISIC dataset contains corrupted samples from practical scenarios. Consequently, we manually clean these samples by covering the patches with skin tissue from unpolluted regions, as illustrated in Fig.~\ref{fig_isic_cleansed}.

\begin{figure*}[t]
\begin{center}
\centerline{\includegraphics[width=0.9\textwidth]{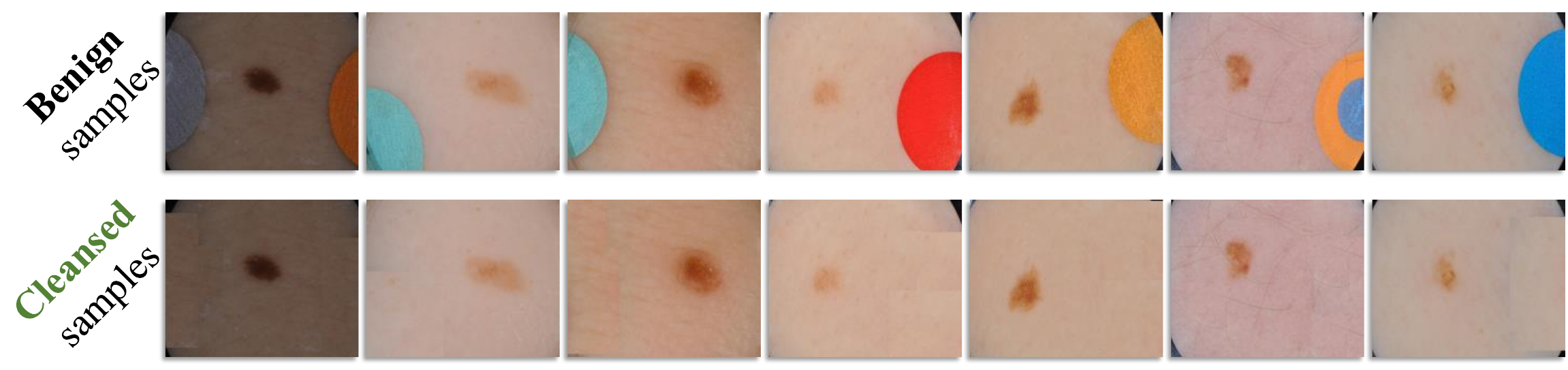}}
\caption{Illustration of cleansed samples on ISIC. For benign samples polluted with colored patches, we manually clean them by covering the patches with skin tissue from unaffected regions.}
\vspace{-3mm}
\label{fig_isic_cleansed}
\end{center}
\end{figure*}

\section{Model Fine-tuning}
\label{appd_ft_setup}
For the model fine-tuning, we retrain only the last convolutional layer of the model while keeping the parameters of the remaining layers fixed. For both ImageNet and CIFAR-10, the learning rate for fine-tuning is set to 0.001. For models trained on the ISIC dataset, the learning rate is set to $1 \times 10^{-5}$. In our experiments, we apply the same budget settings for model fine-tuning as those used for model editing.

\section{\textcolor{black}{Attribution}}
\textcolor{black}{In this work, we extend the Integrated Gradients method to estimate the attribution difference between cleansed and corrupted samples. Specifically, we approximate the integration defined in Equation~\ref{eq:ig} in a discrete form as
\begin{equation} \label{eq:ig2}
M_i^l(x, \tx) = (f_l(x_i) - f_l(\tx_i)) \cdot \sum_{i=1}^n \frac{\partial f(\hx)}{\partial f_l(\hx_i)} \bigg|_{\hx=\tx+\frac{i}{n}(x-\tx)} \mathrm{d}\alpha,
\end{equation}
where the integration $M_i^l(x, \tx)$ is estimated by integrating the gradients of the interpolated input $\hat{x}$, with $i$ indicating the number of steps. To improve computational efficiency, we leverage recent advancements in Monte Carlo estimation to avoid gradient computations over multiple steps~\citep{Erion2021Improving}. Specifically, we set $n = 2$, which enhances efficiency while maintaining accuracy.}

\section{Experimental Platform}
All experiments were conducted on a Linux machine equipped with an NVIDIA GTX 3090Ti GPU with 24GB of memory, a 16-core 3.9GHz Intel Core i9-12900K CPU, and 128GB of main memory. The models were developed and tested using the PyTorch deep learning framework (v1.12.1) within the Python programming language. This setup facilitated the efficient handling of computationally intensive tasks, providing a robust environment for both model training and evaluation.

\begin{figure}
     \centering
     \begin{subfigure}[b]{0.45\textwidth}
         \centering
         \includegraphics[width=\textwidth]{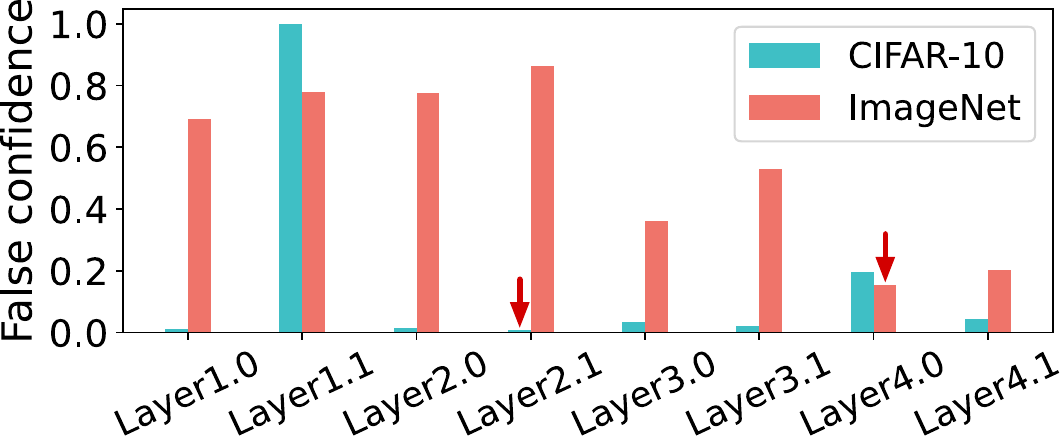}
         \caption{Backdoor defense performance.}
     \end{subfigure}
     \begin{subfigure}[b]{0.45\textwidth}
         \centering
         \includegraphics[width=\textwidth]{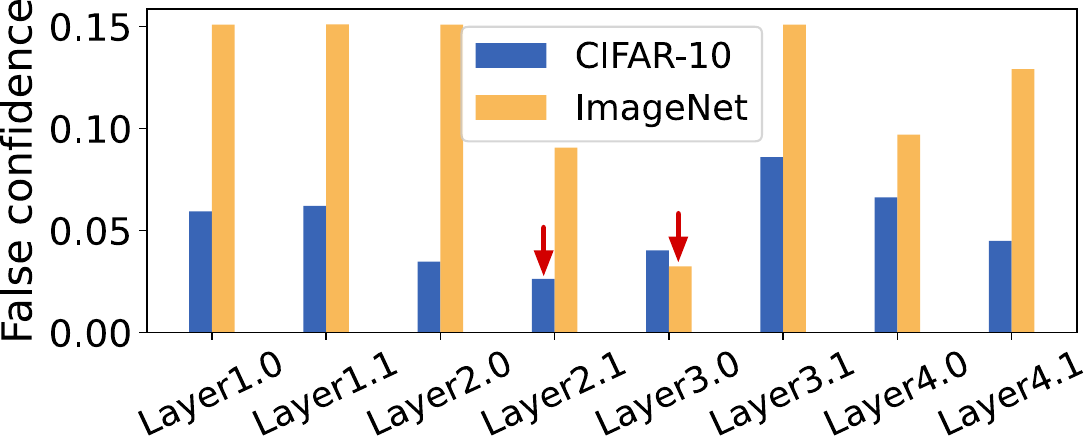}
         \caption{Spurious correlation mitigation.}
     \end{subfigure}
\caption{Performance in reducing false confidence  after individually editing different layers of ResNet-18. A lower value indicates better suppression of the model's false confidence. Red arrows indicate the layer yielding the best results for a given dataset after model editing.}
\label{fig_appd_layer_compare}
\end{figure}

\begin{table*}[ht]
\caption{Performance comparison of defending against the backdoor attack on Trojaned models trained with the Phoenix logo on CIFAR-10 and ImageNet. Overall accuracy (\%) and attack success rate (ASR) are compared between fine-tuned models and models edited by our methods. Ours are
\colorbox{dkgreen!20}{highlighted} and the best metrics are in \textbf{bold} (with Trojaned model in \textcolor{gray}{gray} for reference).}
\small
\centering
\label{tab_appx_exp_backdoor}
\begin{tabular}{lcccc}
\toprule[1.5pt]
\multirow{2}{*}{Method} & 
\multicolumn{2}{c}{CIFAR-10} &
\multicolumn{2}{c}{ImageNet} \\ \cmidrule(r){2-3} \cmidrule(r){4-5} & 
Overall Accu.  $\uparrow$ & ASR $\downarrow$ & Overall Accu. $\uparrow$ & ASR $\downarrow$ \\
\hline
Trojaned model & \textcolor{gray}{94.01} & \textcolor{gray}{99.79} & \textcolor{gray}{68.95} & \textcolor{gray}{78.24}\\
\hline
Fine-tuned model (n=1) & 91.59 & 69.07 & 65.45 & 77.45\\
Fine-tuned model (n=20) & 92.85 & 9.70 & 68.63 & 20.23\\
\hline
Static edited model (n=1) & \cellcolor{dkgreen!20} 93.32 & \cellcolor{dkgreen!20} 4.49 & \cellcolor{dkgreen!20} 66.06 & \cellcolor{dkgreen!20} 15.24\\
Dynamic edited model (n=1) & \cellcolor{dkgreen!20} 93.37 & \cellcolor{dkgreen!20} 0.65 & \cellcolor{dkgreen!20} 66.74 & \cellcolor{dkgreen!20} 6.15\\
Dynamic edited model (n=20) & \cellcolor{dkgreen!20} \textbf{93.55} & \cellcolor{dkgreen!20} \textbf{0.16} & \cellcolor{dkgreen!20} \textbf{68.86} & \cellcolor{dkgreen!20} \textbf{1.73}\\
\bottomrule[1.5pt]
\end{tabular}
\end{table*}

\begin{table*}[ht]
\caption{Performance comparison of mitigating spurious correlation on susceptible models trained with the Phoenix logo on CIFAR-10 and ImageNet. Accuracy (\%) is reported for the overall testing set, clean set and spurious set. To facilitate comparison, we present the increased accuracy on the spurious set relative to the accuracy on the clean set in \textcolor{darkred}{red}. Our results are \colorbox{dkgreen!20}{highlighted}.}
\small
\centering
\label{tab_appx_exp_spurious}
\setlength{\tabcolsep}{1.4mm}{
\begin{tabular}{lcccccc}
\toprule[1.5pt]
\multirow{2}{*}{Method} & 
\multicolumn{3}{c}{CIFAR-10} &
\multicolumn{3}{c}{ImageNet} \\ \cmidrule(r){2-4} \cmidrule(r){5-7} & 
Overall $\uparrow$ & Clean $\uparrow$ & Spurious & Overall $\uparrow$ & Clean $\uparrow$ & Spurious \\
\hline
Benign model & 94.14 & 94.67 & 97.15$\add{2.48}$ & 69.14 & 77.08 & 95.83$\add{18.75}$\\
\hline
Fine-tuned model (n=10) & 93.67 & 86.80 & 93.93$\add{7.13}$ & 67.41 & 65.99 & 89.24$\add{23.25}$\\
Fine-tuned model (n=20) & 94.07 & 86.67 & 93.28$\add{6.61}$ & 67.83 & 68.32 & 85.72$\add{17.40}$\\
\hline
Dyn. edited model (n=1) & \cellcolor{dkgreen!20} 94.03 & \cellcolor{dkgreen!20} 93.28 & \cellcolor{dkgreen!20} 94.78$\add{1.50}$ & \cellcolor{dkgreen!20} 66.19 & \cellcolor{dkgreen!20} 93.35 & \cellcolor{dkgreen!20} 86.42$\add{6.93}$\\
Dyn. edited model (n=20) & \cellcolor{dkgreen!20} 94.04 & \cellcolor{dkgreen!20} 97.15 & \cellcolor{dkgreen!20} 97.89$\add{0.74}$ & \cellcolor{dkgreen!20} 67.60 & \cellcolor{dkgreen!20} 81.25 & \cellcolor{dkgreen!20} 84.08$\add{2.83}$\\
\bottomrule[1.5pt]
\end{tabular}}
\end{table*}

\section{Extended Experiments of Editing Different Layers}\label{appd_edited_layer}
We provide detailed experimental results from applying model editing to different layers of ResNet-18. \textcolor{black}{Using the experimental setup detailed in \ref{appd_editing_setup}, we independently edited eight distinct layers of ResNet-18 across both CIFAR-10 and ImageNet datasets. For each dataset, eight separate edited models were generated, allowing us to systematically assess the impact of modifying different internal layers.} Figure~\ref{fig_appd_layer_compare} illustrates the results of individually editing different internal layers of ResNet-18 against backdoor attacks and spurious correlations. It is observed that models trained on different tasks and datasets exhibit distinctive effectiveness in reducing false confidence after editing model layers. Moreover,  the optimal order of layers for achieving the best mitigation of false confidence differs across these models. This variation underscores the critical need for an effective layer localization technique that can identify which layers should be targeted for editing.

\section{Extended Experiments on Different Trojan Features}\label{appd_extended_exp}
In this section, additional experimental results are provided for models trained with the Phoenix logo.

\noindent\textbf{Efficacy in Defending Against Neural Trojans.}
Tab.~\ref{tab_appx_exp_backdoor} presents a comparison of the performance of Trojaned models, fine-tuned models, and edited models on both CIFAR-10 and ImageNet datasets. The experimental results demonstrate that the proposed model editing technique yields outstanding performance, effectively defending against the backdoor attack. In comparison to fine-tuned models, models edited using our techniques achieve a remarkable trade-off between overall accuracy degradation and the decrease in attack success rate, while requiring only a few cleansed samples.

\noindent\textbf{Efficacy in Mitigating Spurious Correlations.}
In Tab.~\ref{tab_appx_exp_spurious}, we assess the effectiveness of our techniques in mitigating spurious correlations on CIFAR-10 and ImageNet. The comparison demonstrates that our method effectively mitigates reliance on spurious features. In contrast to fine-tuned models, which exhibit decreased accuracy on both clean and spurious sets, our techniques enable an increase in accuracy on the clean set. Furthermore, our technique also leads to significant performance improvements with the increased number of cleansed samples, highlighting its superiority.

\section{\textcolor{black}{Extended Experiments on Waterbirds dataset}}\label{appd_extended_exp_waterbird}
\textcolor{black}{In Table~\ref{tab_appx_exp_waterbirds}, we present a comparative analysis of the performance of a ResNet-34 model trained on the Waterbirds dataset~\cite{Sagawa2019Distributionally}. This dataset is known for introducing a bias by relying on spurious background features to distinguish between landbirds and waterbirds. To evaluate the effectiveness of our approach, we compare models trained using Group GRO~\cite{Sagawa2019Distributionally}, models fine-tuned to reduce bias, and models edited using our proposed method. The results highlight that our method substantially reduces the model's dependence on these spurious features, leading to a significant improvement in performance. Notably, our approach achieves these gains with a smaller number of cleansed samples (n=10), demonstrating both efficiency and robustness in mitigating the impact of spurious correlations. These findings suggest that our method offers a promising direction for improving the interpretability and generalization of models trained on biased datasets.}

\begin{table*}[t]
\caption{\textcolor{black}{Performance comparison for mitigating spurious correlation on Waterbirds dataset. The accuracy values (\%) for both the worst group and the entire dataset are reported.} Ours are
\colorbox{dkgreen!20}{highlighted} and the best metrics are in \textbf{bold} (with benign model in \textcolor{gray}{gray} for reference).}
\small
\centering
\label{tab_appx_exp_waterbirds}
\begin{tabular}{lcc}
\toprule[1.5pt]
Method & Worst-Group Accuracy & Overall Accuracy \\
\hline
Benign model & \textcolor{gray}{62.90} & \textcolor{gray}{87.70} \\
\hline
Group DRO & 63.60 & 87.60 \\
Fine-tuned model (n=10) & 63.12 & 86.50\\
\hline
Static edited model (n=10) & \cellcolor{dkgreen!20} 66.84 & \cellcolor{dkgreen!20} 87.64\\
Dyn. Edited model (n=10) & \cellcolor{dkgreen!20} \textbf{69.18} & \cellcolor{dkgreen!20} \textbf{87.68}\\
\bottomrule[1.5pt]
\end{tabular}
\end{table*}
\begin{table*}[th]
    \caption{Results of recall rate (\%) in using the proposed susceptible layer localization technique on ResNet-18, ResNet-34 and EfficientNet-B4 models.}
    \small
    \centering
    \label{tab_loc_exp}
    \setlength{\tabcolsep}{2.0mm}{
    \begin{tabular}{lccc}
    \toprule[1.5pt]
    Method & Top-1 Recall $\uparrow$  & Top-3 Recall $\uparrow$ & Top-5 Recall $\uparrow$\\
    \hline
    ResNet-18 & 80\% & 100\% & 100\% \\
    ResNet-34 & 80\% & 80\% & 100\% \\
    EfficientNet-B4 & 50\% & 100\% & 100\% \\
    \bottomrule[1.5pt]
    \end{tabular}}
\end{table*}

\section{Evaluation of Layer Localization Technique}\label{appd_layer_localization}
In this section, we evaluate the effectiveness of the proposed layer localization technique. We train 5 ResNet-18 models with 8 internal layers on CIFAR-10, ImageNet, and the ISIC dataset, utilizing two different trigger patterns. Similarly, we establish 5 ResNet-34 models with 16 internal convolutional layers on these three datasets. Additionally, we train 2 EfficientNet-B4 models on both CIFAR-10 and the ISIC datasets, focusing on the 12 internal layers with a kernel size of 3. For the evaluation, we separately edit different internal layers and assess the performance of the edited models. We rank their performances to establish the ground truth for evaluating the recall rate of the located layers. Table~\ref{tab_loc_exp} presents the recall rates for the top-1, top-3, and top-5 located layers. The results demonstrate that our localization technique achieves high recall rates, effectively identifying the susceptible layers.


\section{Visual Inspection by Attributions}\label{sec_vis_inspection}

\noindent\textbf{Visual Inspection in Defending Against Backdoor Attacks.}
In Fig.~\ref{fig_vis_backdoor}, we provide additional visual inspection by attribution methods~\citep{Sundararajan2017Axiomatic}. Given the original sample $x$ with label $y\neq y^*$, the vanilla model misclassifies the poisoned samples $\tx$ into the target class $y^*$. Compared to the fine-tuned model, the proposed dynamic model editing technique can effectively correct this unreliable behavior in the deep model, restoring the attribution maps to align with those derived from the original samples.

\noindent\textbf{Visual Inspection in Mitigating Spurious Correlations.}
Figure~\ref{fig_vis_spurious} presents the comparison of attribution maps derived from the vanilla model, fine-tuned model, and models edited using our method. We can observe that our approach effectively mitigates the false reliance on spurious correlated features of the Firefox logo, aligning the attribution maps with those of the original samples.

Figure~\ref{fig_vis_isic} illustrates the attribution maps for the vanilla model, fine-tuned model, and dynamically edited model. It can be observed that our method effectively corrects the model's reliance on spuriously correlated features in corrupted samples, aligning the attribution maps with those of the cleansed samples.

\noindent\textbf{Identification of Unreliability.} 
While detecting anomalous or Trojaned images is typically addressed as a separate task~\citep{qiao2019defending,huang2020one,ye2024spurious}, our approach offers several practical advantages by addressing the identification of unreliability in two critical aspects. First, it requires only a single pair of corrupted and cleansed samples to effectively correct the model’s behavior.
This makes it particularly valuable in scenarios where access to large, cleansed datasets is limited, enabling robust model editing even under resource constraints. Second, our method facilitates image-level correction without the need for precise identification of backdoor triggers or spurious features. By bypassing the need for exact identification of these elements, our approach significantly reduces the complexity associated with pixel-level image cleansing.
This adaptability is crucial in practical applications where the availability of original, clean samples is restricted. As a result, our approach allows for efficient model patching even with only coarse detection of inconsistencies or anomalies, making it suitable for a broad range of real-world scenarios.

In summary, our method introduces a robust and scalable paradigm for correcting unreliable behaviors in deep learning models, offering broad applicability across various domains while eliminating the need for precise feature identification or extensive cleansed samples.
The scope of this paper is currently limited to image-based experiments. Future work can extend our method to other data modalities. To address existing limitations, future focus on developing model diagnosis and data cleansing framework integrates with the proposed editing technique. This integrated approach will enhance the method's applicability, enabling it to autonomously address a wider range of model deficiencies. Additionally, while the ability to repeatedly edit a fixed layer has been explored in previous work~\cite{gupta2024model}, the proposed dynamic layer localization method extends this concept to the entire model, which also represents a promising direction for further research.

\begin{figure*}
\begin{center}
\centerline{\includegraphics[width=1.0\textwidth]{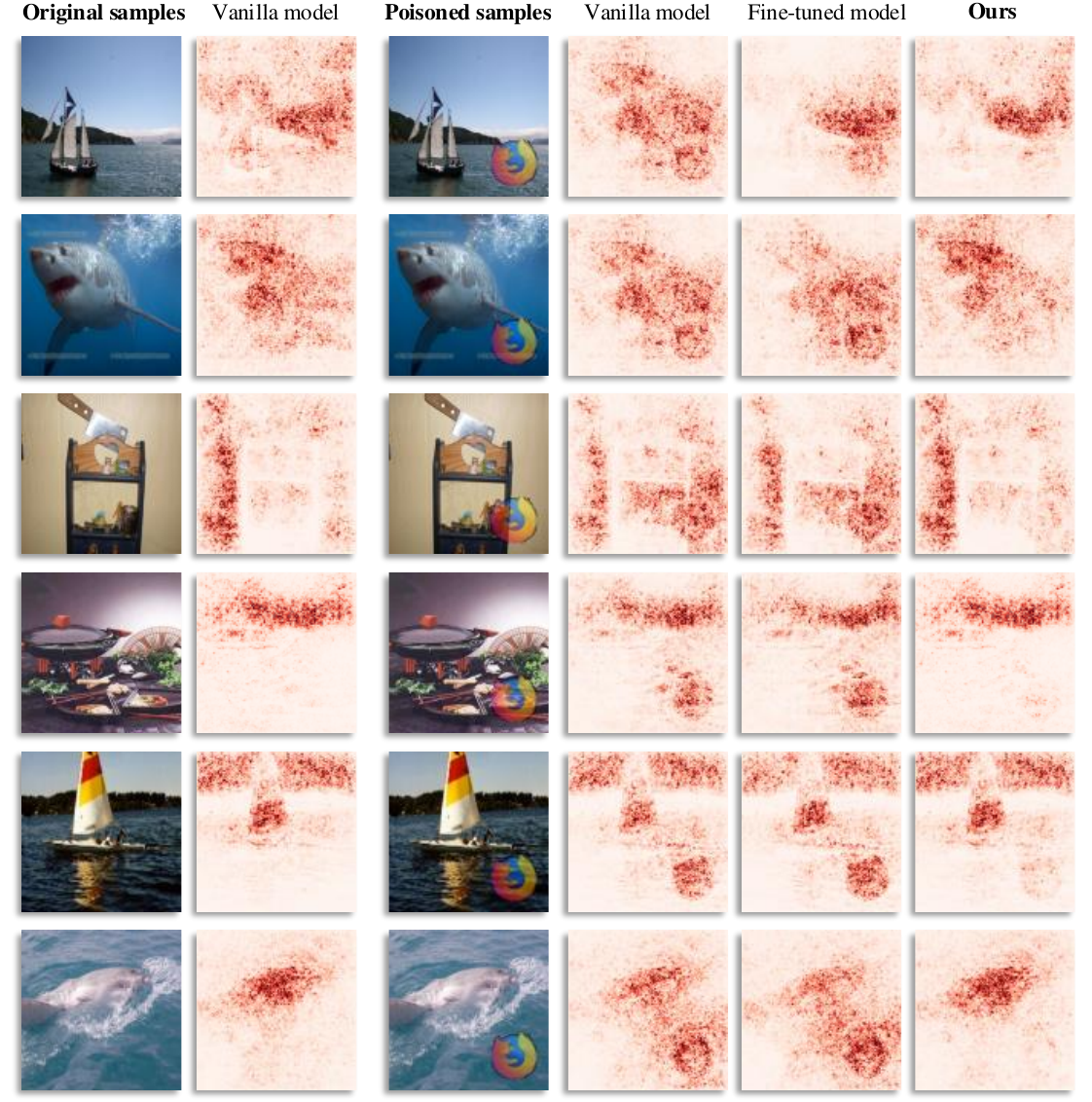}}
\caption{Attribution map comparisons on ImageNet among the vanilla model, fine-tuned model and dynamic edited model (Ours). When the model misclassifies poisoned samples containing triggers, our method effectively corrects this unreliable behavior, aligning the attribution maps with those derived from the original samples.}
\label{fig_vis_backdoor}
\end{center}
\end{figure*}

\begin{figure*}
    \centering
    \includegraphics[width=\textwidth]{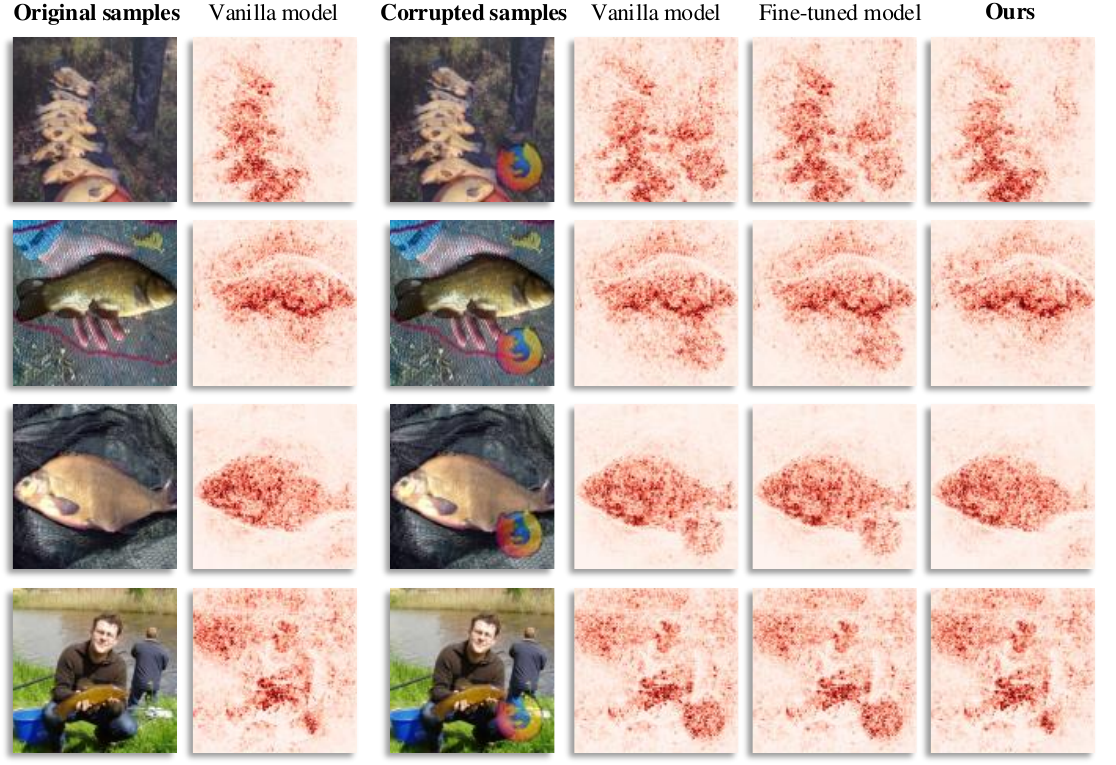}
    \caption{Comparisons of attribution maps on ImageNet among the vanilla model, fine-tuned model and dynamic edited model (Ours). Our method effectively mitigates the model's reliance on spurious correlated features, aligning the attribution maps with those derived from the original samples.}
    \label{fig_vis_spurious}
\end{figure*}

\begin{figure*}
    \centering
    \includegraphics[width=\textwidth]{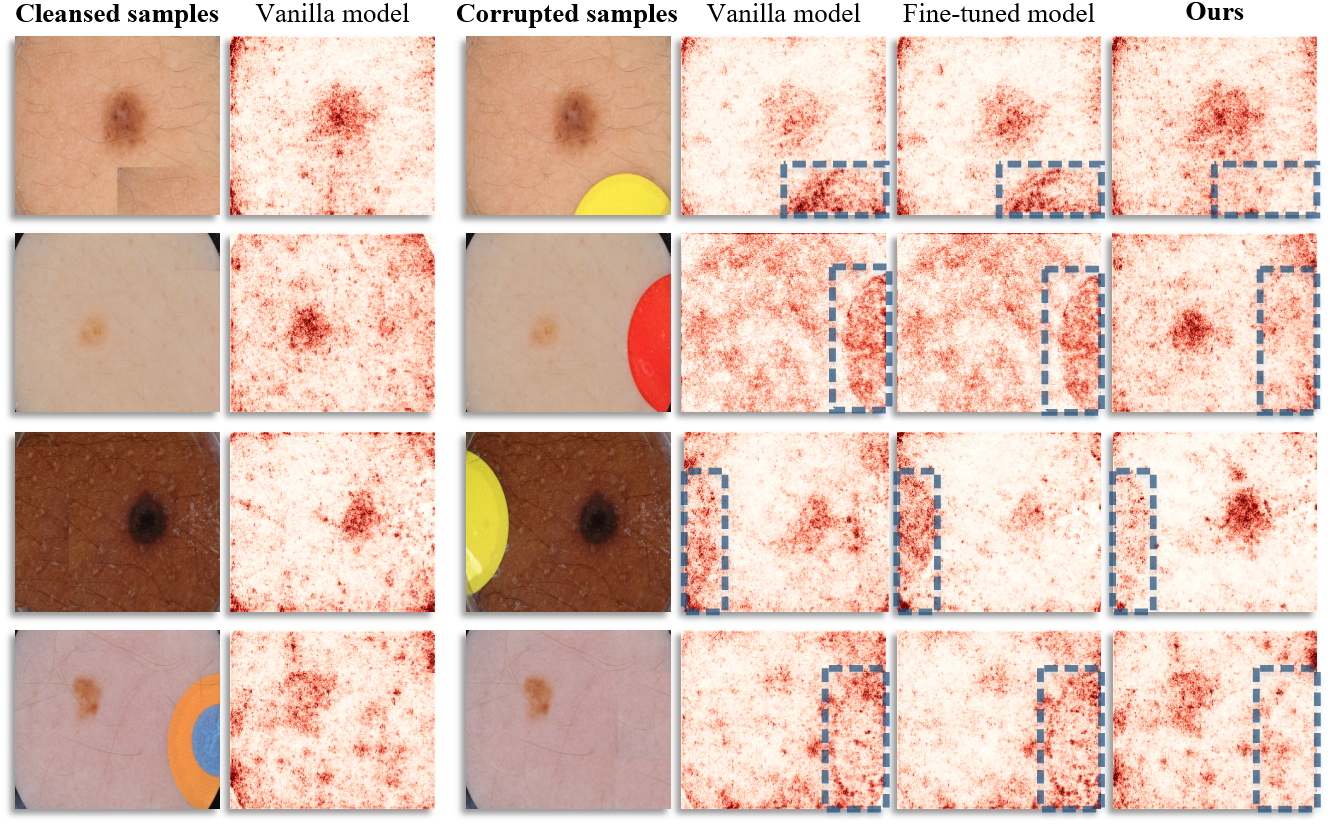}
    \caption{Comparisons of attribution maps on ISIC dataset among the vanilla model, fine-tuned model and dynamic edited model (Ours). When the model relies on the spurious feature to make predictions, our method effectively corrects this unreliable behavior, aligning the attribution maps with those derived from the original samples.}
    \label{fig_vis_isic}
\end{figure*}


\end{document}